\documentclass[11pt]{article}
\usepackage[margin=1in]{geometry}
\usepackage{times}
\newcommand{\firstpage}[1]{}
\newcommand{\correspondance}[1]{}
\newcommand{\keymark}[1]{\textbf{#1}}

\usepackage[utf8]{inputenc}
\usepackage[T1]{fontenc}
\usepackage{amsmath,amssymb,amsthm}
\usepackage{graphicx}
\usepackage{booktabs}
\usepackage{hyperref}
\usepackage{xcolor}
\usepackage{algorithm}
\usepackage{algorithmic}
\usepackage{natbib}
\usepackage{subcaption}
\newtheorem{theorem}{Theorem}
\newtheorem{proposition}[theorem]{Proposition}
\newtheorem{corollary}[theorem]{Corollary}
\newtheorem{definition}[theorem]{Definition}

\newcommand{\lrContent}{\alpha}       
\newcommand{\lrRecord}{\gamma}        
\newcommand{\pInject}{\rho}           
\newcommand{\thresh}{\varphi}         
\newcommand{\turnover}{\lambda}       

\begin{document}

\title{Persistent Memory Through Triple-Loop Consolidation\\
  Under Stochastic Unit Turnover}

\author{%
  Jianwei Lou$^{1}$\\[4pt]
  {\small $^{1}$RailMind Systems, Neuss, Germany}\\[2pt]
  {\small \texttt{j.lou@railmind.eu}}%
}
\date{}

\maketitle

\begin{abstract}
Dissipative cognitive architectures maintain computation through continuous
energy expenditure, where units that exhaust their energy are stochastically
replaced with fresh random state. This creates a fundamental challenge: how
can persistent, context-specific memory survive when all learnable state is
periodically destroyed? Existing memory mechanisms---including elastic weight
consolidation, synaptic intelligence, and surprise-driven gating---rely on
gradient computation and are inapplicable to dissipative systems that do not perform it.

We introduce \emph{Deep Memory} (DM), a backpropagation-free persistent memory
mechanism operating through a triple-loop consolidation cycle:
(1)~\emph{recording} of expert-specific content centroids during active
computation, (2)~\emph{seeding} of replaced units with stored
representations, and (3)~\emph{stabilization} through continuous re-entry
that counteracts dissipative drift. We demonstrate that discrete expert
routing via Mixture-of-Experts (MoE) gating is required, in the regimes tested,
for DM functionality, preventing the centroid convergence that would render
stored memories identical.

Across $1{,}007$ cumulative simulation runs spanning thirteen experimental
blocks, we establish: (i)~removing stable context--expert binding removes
structural specialization ($\text{MI} = 1.10$ vs.\ $0.001$
under binding destruction; $n = 91$); (ii)~DM achieves persistent
representation quality $R = 0.984$ compared to $0.385$ without memory
($n = 16$); (iii)~continuous seeding reconstructs representations after
interference ($R_{\text{recon}} = 0.978$; one-shot seeding fails;
$n = 30$); (iv)~the mechanism operates within a well-characterized
$(K, \pInject)$ envelope with identifiable degradation boundaries ($n = 350$);
(v)~a factorial ablation isolates recording $\times$ seeding as the minimal
critical dyad ($n = 40$); and (vi)~six confirmatory experiments---including
single-factor causal ablation, mediation analysis, and comparison with
associative and reservoir baselines (Hopfield, ESN) under matched turnover---validate
what the mechanism requires and how it compares with those baselines ($n = 370$).
These results establish DM as a falsifiable, bounded mechanism for
persistent memory in backpropagation-free cognitive systems, with functional
parallels to hippocampal consolidation.

\tiny
\keymark{Keywords:} persistent memory, dissipative systems, mixture-of-experts,
backpropagation-free learning, cognitive architecture, hippocampal consolidation,
stochastic unit turnover, autopoiesis
\end{abstract}

\section{Introduction}
\label{sec:intro}

Biological and artificial cognitive systems that maintain function through
continuous energy expenditure face a fundamental memory challenge. In
gradient-based architectures, parameters accumulate information over training
and persist indefinitely. In \emph{dissipative} architectures---systems where
computational units consume energy, compete for resources, and are
stochastically replaced when depleted---all learnable state is periodically
destroyed \citep{prigogine1984order, maturana1980autopoiesis}. The expected
lifetime of any individual unit's state is on the order of $1/\turnover$ computation
steps, where $\turnover$ is the replacement rate. Under typical operating
conditions, this implies that no single unit survives long enough to accumulate
cross-episode information.

This creates a gap in the existing landscape of memory mechanisms.
Catastrophic forgetting \citep{french1999catastrophic} is a well-studied
problem in gradient-based networks; solutions such as elastic weight
consolidation \citep{kirkpatrick2017ewc}, synaptic intelligence
\citep{zenke2017si}, memory-aware synapses \citep{aljundi2018mas}, and
progressive networks \citep{rusu2016progressive} protect important
parameters, but all require gradient access to identify which parameters
matter. The Titans architecture \citep{titans2024} introduced
surprise-driven memory gating, but its memory updates rely on gradient
descent. Backpropagation-free approaches---including classical Hopfield networks
\citep{hopfield1982neural}, reservoir computing
\citep{jaeger2001esn, lukosevicius2009reservoir}, and attractor-based
working memory models \citep{wang2001synaptic, compte2000synaptic}---maintain
representations without backpropagation, but assume fixed network
topology without stochastic unit replacement. Complementary learning
systems theory \citep{mcclelland1995cls, kumaran2016hippocampus}
provides a compelling functional blueprint---fast hippocampal encoding
complemented by slow neocortical consolidation
\citep{frankland2005consolidation, rasch2013sleep}---but no computational
implementation exists for backpropagation-free dissipative systems. The core
question remains: \emph{can a dissipative system with stochastic unit
turnover maintain persistent, context-specific memory without gradient
computation?}

We answer this question affirmatively by introducing \emph{Deep Memory}
(DM), a triple-loop consolidation mechanism that maintains persistent
representations through recording, seeding, and stabilization
(Figure~\ref{fig:architecture}). Our contributions are:

\begin{enumerate}
  \item \textbf{Deep Memory (DM)}: A triple-loop consolidation mechanism
    (recording $\to$ seeding $\to$ stabilization) that maintains persistent
    representations without gradient computation, achieving $R = 0.984$
    under standard conditions.
  \item \textbf{Discrete routing is required in the tested regimes}: We show
    (Proposition~\ref{thm:t3}) and empirically confirm (91 runs, 14 seeds) that
    removing discrete expert routing removes DM's ability to produce
    context-specific memory, with mutual information dropping from $1.10$ to
    $0.001$ when binding is destroyed.
  \item \textbf{Operating envelope}: We characterize the $(K, \pInject)$ parameter
    space across 350 runs, identifying pass, degraded, and failure regimes
    with reproducible boundaries between them.
  \item \textbf{Scheduling invariance}: DM quality is invariant across five
    qualitatively different context-scheduling patterns (75 runs).
  \item \textbf{Minimal mechanism via ablation}: A $2^3$ factorial ablation
    (40 runs) isolates recording $\times$ seeding as the smallest combination
    that reproduces the full effect.
\end{enumerate}

\section{Materials and Methods}
\label{sec:methods}

\subsection{Dissipative Cognitive Grid}
\label{sec:grid}

The system consists of $N$ computational units with structured local
connectivity. Each unit $i$ maintains a content vector $z_i \in
\mathbb{R}^D$ encoding its current representational state and a scalar energy
$E_i$ governing its metabolic viability. Computation proceeds in discrete
cycles, each comprising the following steps:
The population size $N$ and the content dimension $D$ are modelling choices
rather than claims about the granularity of cognition. A finite, discrete
population cannot represent a continuous-valued substrate exactly, and we make no
argument that it should. $N$ is chosen large enough that the population
statistics on which the mechanism depends --- per-expert means and their
dispersion --- are stable; the results below are properties of that regime,
not limits as $N \to \infty$, and no claim is made that increasing $N$
converges to a continuous system.

\begin{enumerate}
  \item \textbf{Energy allocation}: External input $x_t \in \mathbb{R}^D$
    provides energy to units based on local activity.
  \item \textbf{Activation}: Units with sufficient energy ($E_i > E_{\min}$)
    and suprathreshold input produce activation $a_i$, modulated by a
    per-unit homeostatic threshold $\thresh_i$.
  \item \textbf{Content update}: Active units update their content via an
    exponential moving average toward the neighborhood-weighted input:
    $z_i \leftarrow (1 - \lrContent)\,z_i + \lrContent\,\bar{x}_i$.
  \item \textbf{Metabolic cost}: Each activation incurs an energy cost
    $c(a_i)$, depleting $E_i$.
  \item \textbf{Stochastic replacement}: Units with $E_i < E_{\min}$ are
    replaced with fresh random content and energy, implementing the
    dissipative turnover that defines the architecture.
\end{enumerate}

The homeostatic threshold $\thresh_i$ adapts to maintain a target firing rate,
ensuring stable population activity despite energy fluctuations. This
architecture is \emph{backpropagation-free}: no system-level loss function is defined, no
backpropagation occurs, and all learning arises from local Hebbian-type rules
and local adaptation dynamics.

We use \emph{backpropagation-free} rather than \emph{gradient-free}
advisedly. A local rule that moves a stored quantity toward its input---the
moving average of Eq.~\eqref{eq:recording} below, for instance---can always
be rewritten as stochastic gradient descent on a local quadratic objective,
and we do not claim otherwise. That equivalence holds for essentially any
first-order online rule, however, and therefore does not separate one
architecture from another. What does separate them is the \emph{scope of
credit assignment}: this system defines no objective over its own output,
propagates no error signal backwards across layers, and performs no weight
transport. Every update uses only quantities locally available at the unit
or expert being updated.

\subsection{Discrete Expert Routing}
\label{sec:moe}

To enable context-specific computation, the $N$ units are partitioned into
$K$ expert groups of $N/K$ units each. At each computation step, a single
expert group $k^*$ is selected based on input similarity:
\begin{equation}
  k^* = \arg\max_k \operatorname{sim}(x_t, \mu_k)
\end{equation}
where $\mu_k$ is the running centroid of inputs routed to expert $k$. Only
units in the selected group undergo activation and content update; all other
groups are inhibited. Each expert maintains an independent homeostatic
threshold $\thresh_k$. This implements \emph{hard} (discrete) routing with no
soft mixing or gradient-based gating---a key distinction from standard MoE
architectures \citep{jacobs1991adaptive, shazeer2017moe, fedus2022switch}.

Three properties of this rule are used later and are worth making explicit.
First, $\mu_k$ is maintained online from the inputs actually routed to
expert $k$; the router is an estimator running on the same stream as the
rest of the system and is given no target signal. Second, the
\emph{partition} $\{G_k\}$ --- which unit belongs to which expert --- is
fixed at initialisation and is never reassigned. Stochastic replacement
overwrites a unit's content and energy (the replacement step of
Algorithm~\ref{alg:cycle})
but leaves its group membership untouched, so what turnover perturbs is the
content an expert holds, not the identity of the units holding it. This
distinction is what allows the drift analysis of
Section~\ref{sec:dm-theory} to treat the partition as fixed while the
contents move. Third, because selection is an argmax rather than a
differentiable mixture, no gradient passes through the routing decision;
the routing layer therefore sits inside the backpropagation-free regime of
Section~\ref{sec:grid} rather than being an exception to it.

\begin{algorithm}[t]
\caption{Computation Cycle with MoE and Deep Memory}
\label{alg:cycle}
\begin{algorithmic}[1]
\REQUIRE Input $x_t \in \mathbb{R}^D$, expert count $K$, grid state, DM state $\{m_k\}$
\STATE $k^* \leftarrow \arg\max_k \operatorname{sim}(x_t, \mu_k)$
  \COMMENT{Select expert}
\STATE Activate units in group $k^*$; inhibit others
\FOR{each active unit $i$ in group $k^*$}
  \STATE $a_i \leftarrow f(E_i, \thresh_{k^*}, \text{neighbors}(i))$
  \STATE $z_i \leftarrow (1 - \lrContent)\,z_i + \lrContent\,\bar{x}_i$
    \COMMENT{Content update}
  \STATE $E_i \leftarrow E_i - c(a_i)$
    \COMMENT{Metabolic cost}
\ENDFOR
\STATE $m_{k^*} \leftarrow (1 - \lrRecord)\,m_{k^*} + \lrRecord\,\bar{z}_{k^*}$
  \COMMENT{DM recording}
\FOR{each unit $i$ where $E_i < E_{\min}$}
  \STATE Replace $z_i, E_i$ with random initialization
  \IF{unit $i \in$ group $k$ \textbf{and} $\text{Bernoulli}(\pInject) = 1$}
    \STATE $z_i \leftarrow m_k$
      \COMMENT{DM seeding}
  \ENDIF
\ENDFOR
\end{algorithmic}
\end{algorithm}

\subsection{Deep Memory: Triple-Loop Consolidation}
\label{sec:dm}

Deep Memory operates through three concurrent loops that together maintain
persistent representations despite stochastic unit turnover
(Algorithm~\ref{alg:cycle}, lines 8--13):

\textbf{Loop 1: Recording.} For each expert group $k$, DM maintains a
running centroid $m_k \in \mathbb{R}^D$ via exponential moving average of
the content vectors of active units:
\begin{equation}
  m_k \leftarrow (1 - \lrRecord)\,m_k + \lrRecord\,\bar{z}_k
  \label{eq:recording}
\end{equation}
where $\bar{z}_k = \frac{1}{|F_k|}\sum_{i \in F_k} z_i$ is the mean content
of firing units in group $k$ and $\lrRecord$ is the recording rate.

\textbf{Loop 2: Seeding.} When a unit $i$ in group $k$ is stochastically
replaced, its content is initialized to the stored centroid $m_k$ with
injection probability $\pInject$:
\begin{equation}
  z_i^{\text{new}} =
  \begin{cases}
    m_k & \text{with probability } \rho \\
    z_{\text{random}} & \text{with probability } 1 - \rho
  \end{cases}
  \label{eq:seeding}
\end{equation}

\textbf{Loop 3: Stabilization.} Because unit turnover is continuous and
seeding occurs at every replacement event, DM creates a re-entry loop: the
centroid $m_k$ is continually refreshed by active units (Loop~1) and
continually injected into replacement units (Loop~2). This closed loop
counteracts the dissipative drift that would otherwise cause content vectors
to diverge from their expert-specific attractors.

The triple-loop architecture distinguishes DM from replay-based memory
\citep{carr2011hippocampal, lin1992reinforcement}: DM does not store and
replay individual experiences, but rather maintains a running
population-level summary \citep{pouget2000information} that is continuously
re-injected into the active computational substrate.

\paragraph{Scope of the term.} Deep Memory is an \emph{explicit
architectural component}, not an emergent property. The name reflects the
layered organisation of the architecture---DM occupies a dedicated
consolidation layer above the routing substrate---and carries no claim that
persistence arises spontaneously from the grid dynamics.
Equation~\eqref{eq:recording} states plainly what is stored and where: a
per-expert centroid $m_k$ maintained outside the population of computational
units, so that it survives the replacement of any individual unit. What this
paper claims is that such an explicit, structurally separate store is
sufficient to maintain context-specific representations under continuous
stochastic turnover, and that recording and seeding are jointly required for
it to do so (Section~\ref{sec:res-ablation}). It does not claim that the
store is self-organising, nor that persistence is an emergent consequence of
the dissipative dynamics alone.

A store held apart from the computing substrate is the design premise of
complementary learning systems \citep{mcclelland1995cls,
kumaran2016hippocampus}, in which the hippocampus is anatomically distinct
from the neocortical networks whose representations it helps stabilise.
\emph{The direction of the fast--slow division is, however, inverted here},
and we state this explicitly rather than adopt the analogy wholesale. In CLS
the hippocampus encodes rapidly while the neocortex integrates slowly; in
DM the stored centroid $m_k$ is itself the \emph{slow} quantity---an
exponential moving average with rate $\lrRecord$
(Eq.~\ref{eq:recording})---while the active units it seeds are the fast,
continually replaced component. The inversion follows from what is absent:
this architecture has no gradient-based cortical learner, so slow
integration has no backward pass to perform it and is carried by the moving
average instead. Section~\ref{sec:hippocampal} returns to the limits this
places on the analogy.

\begin{figure}[t]
  \centering
  \includegraphics[width=0.85\linewidth]{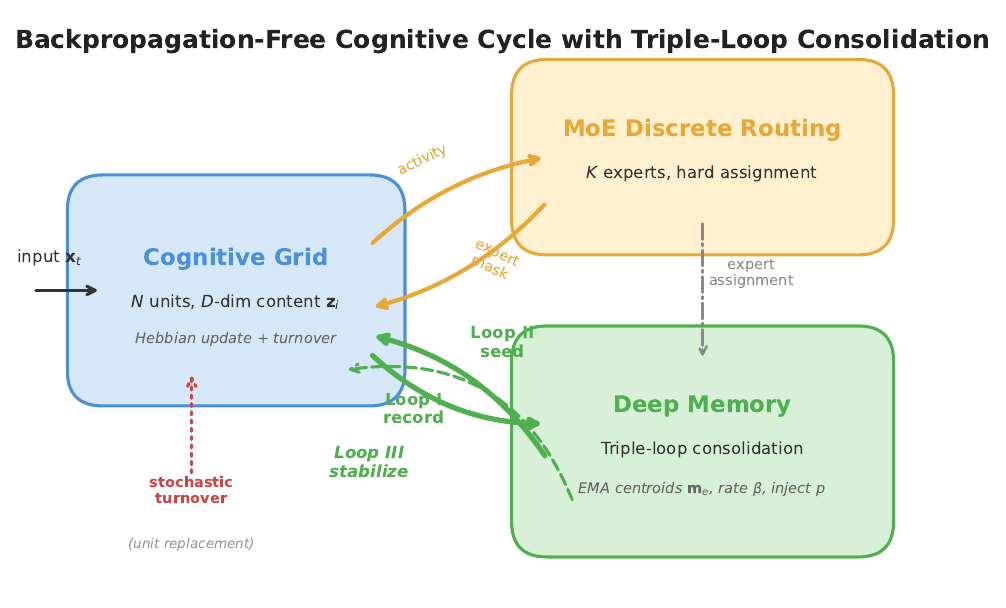}
  \caption{Schematic of the backpropagation-free cognitive cycle with triple-loop
    consolidation. Content activations in the grid are routed to one of $K$
    experts; Deep Memory records expert-specific centroids (Loop~I),
    reseeds replaced units (Loop~II), and continuously stabilizes
    representations against dissipative drift (Loop~III). Stochastic
    turnover periodically replaces units, creating the persistence
    challenge that the triple-loop addresses.}
  \label{fig:architecture}
\end{figure}

\subsection{Experimental Protocol}
\label{sec:protocol}

All experiments use synthetic multi-context tasks where $K$ distinct input
distributions, each with a unique centroid in $\mathbb{R}^D$, are presented
in alternating blocks or stochastic schedules. We evaluate the following
metrics:

\begin{itemize}
  \item \textbf{Representation quality} $R$: Cosine similarity between each
    expert group's mean content vector and the ground-truth centroid of its
    assigned context, averaged across experts. $R = 1$ indicates perfect
    alignment; $R = 0$ indicates chance.
  \item \textbf{Firing selectivity} $f_{\text{sel}}$: Fraction of units
    that fire exclusively in their assigned context (versus firing across
    multiple contexts).
  \item \textbf{Mutual information} $\text{MI}(k, c)$: Information shared
    between expert assignment $k$ and context label $c$. Maximum $\ln K$
    for perfect binding.
  \item \textbf{Silhouette score}: Mean silhouette across context-labeled
    content vectors, measuring structural separation.
\end{itemize}

Statistical testing uses multiple random seeds per condition with
median reporting and bootstrap 95\% confidence intervals. The results
reported here use 4--7 seeds per condition as pilot-scale estimation
of effect sizes and directions. For confirmatory claims, we target
$\geq 14$--20 seeds per condition (powered for medium effect sizes,
Cohen's $d \geq 0.5$, at $\alpha = 0.05$, $1 - \beta = 0.80$).
Multi-factor stress tests (binding disruption via per-cycle random
permutation of expert assignments) are used to establish necessary
conditions; single-factor ablations are identified as a priority for
confirmatory causal identification. All experiments are fully reproducible
from the published protocol with specified random seeds.

Two consequences of the pilot-scale seed count should be read alongside every
interval we report. First, with $n=5$ seeds a bootstrap interval for a median
can only take values in the sample itself, so the intervals below describe the
spread of the observed seeds rather than a smooth estimate of sampling
uncertainty. Second, and more sharply, the smallest attainable $p$-value of a
two-sided Wilcoxon signed-rank test at $n=5$ is $2^{-4} = 0.0625$: no
paired non-parametric test at this seed count can reach $p<0.05$, whatever the
effect size. Where we report paired differences we therefore give the interval
and the effect size, and treat the accompanying parametric $p$-values as
descriptive rather than confirmatory. This is the concrete form of the
pilot-scale caveat stated above, and it is why single-factor confirmatory
ablation at $\geq 14$--20 seeds remains listed as required future work.

\section{Theoretical Framework}
\label{sec:theory}

\subsection{Centroid Collapse Under Uniform Activation}
\label{sec:t3}

We first establish why persistent memory is non-trivial in dissipative
architectures. The content update rule drives content vectors toward
the local input average. Under what we term \emph{uniform activation},
this creates an inevitable contraction toward a single grand centroid.
We formalize the conditions under which this collapse occurs:

\begin{definition}[Uniform Activation (UA)]
\label{def:ua}
A system satisfies Uniform Activation if:
\textup{(UA-1)} the firing support is context-invariant: for all contexts
$k$, each unit $i$ is updated with the same probability;
\textup{(UA-2)} mixing weights are equal: context sampling follows
$\pi_k = 1/K$ for all $k$; and
\textup{(UA-3)} updates are unbiased: the expected input to unit $i$ under
context $k$ satisfies $\mathbb{E}[\bar{x}_i \mid k] = c_k$.
\end{definition}

\begin{proposition}[Grand-Centroid Collapse]
\label{thm:t3}
Consider $N$ units with content vectors $\{z_i\}_{i=1}^N$ updated via the
exponential moving average rule $z_i \leftarrow (1 - \lrContent)\,z_i +
\lrContent\,\bar{x}_i$ across $K$ contexts with centroids $\{c_k\}_{k=1}^K$.
Under Uniform Activation (Definition~\ref{def:ua}),
$\lim_{t \to \infty} \operatorname{Var}(\{z_i\}) = 0$
and all content vectors converge to the grand centroid
$\bar{c} = \frac{1}{K}\sum_k c_k$.
\end{proposition}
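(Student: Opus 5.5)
The plan is to read the statement as a claim about the \emph{conditional mean dynamics} of each content vector, and then handle the spread across units separately. Fix a unit $i$ and write $q_i$ for its per-step update probability. By (UA-1) this probability does not depend on the active context, and under the within-context uniform firing assumed in the statement it is common to all units, $q_i = q > 0$. At step $t$ a context $k$ is drawn with probability $\pi_k = 1/K$ by (UA-2). Given $k$, the unit is updated with probability $q$, independently of $k$, and in that case its expected target is $\mathbb{E}[\bar{x}_i \mid k] = c_k$ by (UA-3). Conditioning on the past $\mathcal{F}_t$ and averaging over context and firing gives
\begin{equation*}
  \mathbb{E}[z_i(t+1) \mid \mathcal{F}_t] = (1 - \lrContent q)\, z_i(t) + \lrContent q\, \bar{c}.
\end{equation*}
The three UA conditions are each used exactly once here. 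Context-invariant support lets the firing probability factor out of the context average. Equal mixing turns $\sum_k \pi_k c_k$ into the grand centroid $\bar{c}$. Unbiasedness replaces the input by its context centroid. Taking full expectations and iterating gives $\mathbb{E}[z_i(t)] - \bar{c} = (1-\lrContent q)^t (z_i(0) - \bar{c}) \to 0$ geometrically, for every $i$, since $0 < \lrContent q \le 1$.

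Next, to get $\operatorname{Var}(\{z_i\}) \to 0$, I would work with pairwise differences $d_{ij} = z_i - z_j$. The population variance is $\frac{1}{2N^2}\sum_{i,j}\|d_{ij}\|^2$, so it suffices to show each $\|d_{ij}\|^2 \to 0$. Applying the mean recursion above to $i$ and $j$ and subtracting shows that $\mathbb{E}[d_{ij}(t)]$ contracts by $(1-\lrContent q)$ per step. Hence the variance of the expected contents $\{\mathbb{E} z_i(t)\}$ vanishes geometrically. This is the cleanest rigorous reading, and I would state it first. In the stronger pathwise version, units fire together on a step and receive the same neighborhood-weighted input. Then on joint updates $d_{ij} \leftarrow (1-\lrContent) d_{ij}$ exactly, because the input cancels. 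When only one unit updates, $d_{ij}$ moves by $\lrContent(\bar{x} - z)$, whose conditional mean is again a contraction toward zero. A Lyapunov argument on $V_t = \sum_{i,j}\|d_{ij}\|^2$ would then give $\mathbb{E}[V_{t+1} \mid \mathcal{F}_t] \le (1-\kappa) V_t + \lrContent^2 \sigma^2$, with $\kappa > 0$ and $\sigma^2$ the input noise scale.

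The main obstacle is exactly that additive $\lrContent^2\sigma^2$ term. With a constant rate $\lrContent$ and stochastic inputs, each $z_i$ is an EMA of noisy samples and does not converge to a point. Its stationary fluctuation around $\bar{c}$ is of order $\lrContent\,\sigma^2/(2-\lrContent)$, and the cross-unit variance likewise only converges to an $O(\lrContent)$ floor unless the inputs are shared. I would therefore prove the limit in one of three senses, stated explicitly:
\begin{itemize}
\item[(a)] in expectation, via the contraction above;
\item[(b)] almost surely when all units share a common input stream, since then $V_t \le (1-\lrContent)^{2n_t} V_0$, where $n_t \to \infty$ counts joint updates;
\item[(c)] almost surely under Robbins--Monro rates $\sum_t \lrContent_t = \infty$ and $\sum_t \lrContent_t^2 < \infty$, using the Robbins--Siegmund supermartingale lemma on $\|z_i - \bar{c}\|^2$.
\end{itemize}
Finally, I would remark that the proposition abstracts away replacement. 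Random re-initialisation injects fresh variance at rate $\turnover$, and the same Lyapunov recursion then yields a stationary spread centred on $\bar{c}$ rather than on any context-specific $c_k$. That is the collapse the proposition is meant to capture.
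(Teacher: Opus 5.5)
Your argument is sound, and it takes a different and more careful route than the paper. The paper's sketch is a single contraction argument: every unit sees the same equal-weight mixture, the EMA contracts at rate $\alpha$, ``the time-averaged input'' converges to $\bar{c}$, and the inter-unit variance decays like $(1-\alpha)^t$. You instead write the conditional mean recursion $\mathbb{E}[z_i(t+1)\mid\mathcal{F}_t]=(1-\alpha q)\,z_i(t)+\alpha q\,\bar{c}$ explicitly. You handle the spread separately through pairwise differences, and then ask in which sense a limit can actually hold. That last step is where your version earns its keep. With a fixed rate the EMA weights only the most recent $\sim 1/\alpha$ inputs, so it never averages out the context switching. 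The paper's passage from ``the time-averaged input converges to $\bar{c}$'' to ``$z_i \to \bar{c}$'' is precisely the step that fails pathwise. Its $(1-\alpha)^t$ rate is your case (b), which tacitly requires identical inputs and simultaneous updates. The paper's sketch buys brevity and the qualitative point the proposition is used for: a homogenising force centred on $\bar{c}$ rather than on any $c_k$. Yours buys a statement that is actually true. It also makes explicit that the paper's appeal to Robbins--Monro stochastic approximation delivers both halves of the proposition almost surely only under your reading (c), that is, only after the constant rate is replaced by decreasing steps.

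Three details to tighten. First, in (b) only the variance half holds almost surely. The common trajectory is still an AR(1)-type process about $\bar{c}$ with stationary mean-square deviation $\alpha\sigma^2/(2-\alpha)$, so convergence to $\bar{c}$ there is only in mean, and you should say so. Second, define $\sigma^2=\mathbb{E}\|\bar{x}_i-\bar{c}\|^2$ explicitly. It contains the between-context term $\frac{1}{K}\sum_k\|c_k-\bar{c}\|^2$, so the floor around $\bar{c}$ is strictly positive even for noiseless inputs whenever the $c_k$ are distinct, which is exactly the case the proposition is about. Third, your conditional recursion reads (UA-2) as i.i.d.\ per-step context draws. It also needs a firing probability that does not depend on the state, and $\mathbb{E}[\bar{x}_i\mid k,\ \text{unit fires}]=c_k$ rather than the unconditional (UA-3). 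State these as your interpretation of Definition~\ref{def:ua}. This matters because under the block schedules used in the experiments, even the mean trajectory oscillates with the schedule instead of converging; the oscillation is small only when $\alpha$ times the block length is small.
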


This result is a direct consequence of classical stochastic approximation
dynamics under equal mixing weights \citep{robbins1951stochastic}; we
include it as a diagnostic baseline that formalizes the collapse failure
mode induced by uniform activation.

\begin{proof}[Proof sketch]
Under UA, each unit $i$ receives updates from all $K$
context distributions with equal frequency (UA-1, UA-2). The EMA rule is a
contraction mapping with rate $\lrContent$: at each step, $z_i$ moves a fraction
$\lrContent$ toward $\bar{x}_i$. By UA-3, the time-averaged input to each unit
converges to the mixture centroid $\bar{c}$. Since all units receive the
same mixture, inter-unit variance decays exponentially as
$(1-\lrContent)^t \to 0$, yielding convergence to $\bar{c}$.
\end{proof}

Proposition~\ref{thm:t3} explains why na\"{\i}ve dissipative architectures
cannot maintain context-specific representations: under uniform activation,
the content update rule is a homogenizing force that erases inter-context
distinctions.

\subsection{Discrete Routing Breaks Convergence}
\label{sec:moe-theory}

\begin{proposition}
\label{prop:moe-breaks-t3}
Under discrete expert routing with $K$ groups, each unit receives updates
from at most one context. The contraction in Proposition~\ref{thm:t3} is
restricted to each expert subpopulation: content vectors within group $k$
converge to the context-specific centroid $c_k$, not the grand centroid
$\bar{c}$. The inter-expert variance $\operatorname{Var}(\{c_k\})$ is
preserved.
\end{proposition}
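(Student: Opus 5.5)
The plan is to reduce Proposition~\ref{prop:moe-breaks-t3} to the per-group dynamics and reuse the contraction argument of Proposition~\ref{thm:t3} inside each group. I will work in an idealized regime. The router has bound context $k$ to expert group $G_k$, so that $k^*(x_t)=k$ whenever $x_t$ is drawn from context $k$. The partition $\{G_k\}$ is fixed, as stated in Section~\ref{sec:moe}. The unbiasedness condition (UA-3) holds within each context. The first step is the support claim. Because only units in $G_{k^*}$ are updated and all other groups are inhibited, a unit $i\in G_k$ is updated only at times when $k^*=k$. Under the binding assumption these are exactly the times at which the input comes from context $k$. Hence each unit receives updates from at most one context, which is the first sentence of the statement. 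This is essentially definitional once binding is assumed, and I will state binding explicitly as a hypothesis.

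The second step is within-group convergence. Fix $i\in G_k$ and let $t_1<t_2<\dots$ be its update times, which are infinitely often under $\pi_k>0$ and a positive firing probability. Along this subsequence we have $z_i^{(n+1)}=(1-\alpha)z_i^{(n)}+\alpha\bar{x}_i^{(n)}$ with $\mathbb{E}[\bar{x}_i^{(n)}\mid k]=c_k$. Unrolling gives
$\mathbb{E}[z_i^{(n)}]-c_k=(1-\alpha)^n(z_i^{(0)}-c_k)\to 0$.
The residual fluctuation is a weighted sum of zero-mean noise terms. With constant $\alpha$ its variance is bounded by $\alpha\sigma^2/(2-\alpha)$. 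So the convergence holds in mean, and for noise of order $\alpha$ it holds in mean square; it is exact if $\alpha$ decays Robbins--Monro style, as in \citet{robbins1951stochastic}. This is the same computation as the proof sketch of Proposition~\ref{thm:t3}. The only change is that the input mixture for group $k$ is the point mass on context $k$ rather than the uniform mixture, so the limit is $c_k$ and not $\bar c$. Inter-unit variance within $G_k$ then decays at rate $(1-\alpha)^n$, exactly as before.

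The third step is preservation of the inter-expert variance. The group means satisfy $\bar z_k\to c_k$. By the law of total variance, the population variance splits as
$\operatorname{Var}(\{z_i\})=\tfrac1K\sum_k \operatorname{Var}(\{z_i\}_{i\in G_k})+\operatorname{Var}(\{\bar z_k\})$,
using equal group sizes $N/K$. The first term vanishes and the second tends to $\operatorname{Var}(\{c_k\})$. This quantity is strictly positive whenever the centroids are not all equal, which contrasts with the zero limit in Proposition~\ref{thm:t3}.

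The main obstacle is the binding assumption. The router's centroids $\mu_k$ are themselves online estimates, so the argmax could initially misroute inputs or let two contexts share one expert. I will handle this first by assuming the centroids $c_k$ are separated relative to the input noise. Under that assumption, once each $\mu_k$ lies in the Voronoi cell of a distinct $c_k$, it stays there: each $\mu_k$ is an EMA of inputs in its own cell and contracts toward that cell's centroid. If a fully general treatment of router formation proves intractable, I will state binding as a hypothesis of the proposition. Replacement events I will treat as resets of $z_i^{(0)}$. Since the contraction forgets initial conditions geometrically, the group means still converge, perhaps only to within an $O(\lambda/\alpha)$ neighbourhood of $c_k$. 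I will flag that exact convergence under turnover is deferred to the drift analysis of Section~\ref{sec:dm-theory}.
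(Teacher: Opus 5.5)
Your proposal is correct and follows the paper's own route. You restrict the contraction of Proposition~\ref{thm:t3} to each group $G_k$, note that under correct routing its inputs come from context $k$ alone so the limit is $c_k$, and observe that disjoint, uncoupled groups leave $\operatorname{Var}(\{c_k\})$ uncontracted; the paper likewise assumes correct binding rather than deriving it, so your fallback is exactly its choice. Your law-of-total-variance decomposition and constant-$\alpha$ caveat sharpen the paper's sketch, but the within-group term (and, with unit-specific inputs, the $(1-\alpha)^n$ inter-unit decay) vanishes only in the noise-free or decaying-step regime you identify yourself.
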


\begin{proof}[Proof sketch]
Under discrete routing a unit $i \in G_k$ is updated only on steps where
expert $k$ is selected, so the sequence of inputs it averages is drawn from
context $k$ alone rather than from the mixture. Applying the argument of
Proposition~\ref{thm:t3} within $G_k$ gives convergence to $c_k$. The groups
are disjoint and no update couples them, so the between-group variance
$\operatorname{Var}(\{c_k\})$ is not contracted.
\end{proof}

This proposition identifies what a correct assignment buys: without discrete
routing, the homogenizing force of the content update rule renders all memory
entries identical regardless of the memory mechanism employed. Two
qualifications bound what it establishes.

First, the proposition \emph{assumes} routing is correct --- that inputs from
context $k$ reach group $k$ --- and says nothing about how such an assignment
is obtained. Expert assignment is itself an inverse problem, and an ill-posed
one: many partitions of $N$ units into $K$ groups are consistent with a given
input stream, and the mapping from partitions to observable behaviour is not
injective. Proposition~\ref{prop:moe-breaks-t3} is therefore a statement about
what a correct assignment buys, not a construction of one. The experiments of
Section~\ref{sec:res-causality} are shaped accordingly: they test the
consequence of \emph{destroying} an assignment, which is well posed, rather
than the uniqueness of recovering one, which is not.

Second, the two propositions describe idealised extremes --- perfectly uniform
activation on one side, perfectly stable routing on the other. Neither holds
exactly in the running system, in which stochastic replacement continuously
perturbs the content held by each group's units. The regime the architecture actually occupies lies
between them, and is the subject of the next section.

\subsection{From Routing to Persistence: The DM Bridge}
\label{sec:bridge}

Propositions~\ref{thm:t3} and~\ref{prop:moe-breaks-t3} characterize two
extremes: uniform activation leads to collapse, and perfect discrete
routing preserves separation. The full DM system operates in an
intermediate regime in which stochastic turnover continuously perturbs
the content held by each expert's units, while leaving the partition itself
intact. The following proposition connects routing structure
to the observed persistent memory:

\begin{proposition}[DM Seeding Maintains Effective Routing Under Turnover]
\label{prop:bridge}
Under stochastic unit replacement at rate $\turnover$, the injection of
stored expert centroids $m_k$ into replacement units (Loop~2) biases the
content of new units toward their expert's attractor. Combined with
input-driven expert selection, this produces an effective update
distribution where each unit $i$ in group $k$ receives updates
predominantly from context $k$, violating Assumption~\textup{UA-1}. The
resulting mutual information $\operatorname{MI}(I; K) > 0$ mediates the
representation quality $R$.
\end{proposition}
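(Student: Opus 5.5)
The plan is to make the proposition quantitative by reducing it to a per-group drift recursion. Proposition~\ref{prop:moe-breaks-t3} then supplies the within-group contraction, and the only new ingredient is the replacement step of Algorithm~\ref{alg:cycle}. Fix a group $G_k$; the partition is fixed, as noted in Section~\ref{sec:moe}. Write $e_k(t) = \mathbb{E}\,\|\bar{z}_k(t) - c_k\|$ for the expected deviation of the group mean from its context centroid, and $d_k(t) = \mathbb{E}\,\|m_k(t) - c_k\|$ for that of the stored centroid. In one cycle three things happen.

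\textbf{Step 1 (update).} When $k$ is selected, which happens with probability $\pi_k$ under correct routing, the EMA contracts $e_k$ by a factor $(1-\lrContent)$ toward $c_k$.

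\textbf{Step 2 (replacement).} A fraction $\turnover$ of units is replaced. A fraction $\pInject$ of those receives $m_k$, and the rest receive random content with mean $\bar{z}_{\text{rand}}$.

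\textbf{Step 3 (recording).} Recording moves $m_k$ a fraction $\lrRecord$ toward $\bar{z}_k$.

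These steps give a coupled linear recursion in $(e_k, d_k)$:
\[
e_k' \le \bigl(1-\pi_k\lrContent\bigr)\bigl[(1-\turnover)e_k + \turnover\pInject d_k + \turnover(1-\pInject)\|\bar{z}_{\text{rand}} - c_k\|\bigr],
\qquad
d_k' \le (1-\lrRecord)d_k + \lrRecord e_k .
\]
I would then show that the $2\times 2$ iteration matrix has spectral radius below one. This holds whenever $\pi_k\lrContent > 0$, because both rows are convex-type averages with one strict contraction. It follows that $(e_k, d_k)$ converges to a fixed point proportional to $\turnover(1-\pInject)\|\bar{z}_{\text{rand}} - c_k\|$. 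That fixed point shrinks to $0$ as $\pInject \to 1$, and stays bounded away from the grand-centroid collapse as long as $\turnover(1-\pInject)$ is small compared with $\pi_k\lrContent$. This is the quantitative form of ``biases the content of new units toward their expert's attractor.''

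\textbf{Step 4 (routing is self-consistent).} Next I would show that input-driven selection stays correct. The router centroid $\mu_k$ is an EMA of inputs routed to $k$. If the $c_k$ are separated by a margin $\Delta$ and the noise is bounded, an inductive argument applies: if $\mu_j$ is within $\Delta/2$ of $c_j$ for all $j$, then argmax routing sends context-$k$ inputs to $k$ with probability at least $1-\epsilon$, and this in turn keeps $\mu_k$ near $c_k$. So each unit in $G_k$ receives updates from context $k$ with probability at least $1-\epsilon$. The per-unit update distribution therefore depends on $k$, which directly contradicts UA-1. By contrast with Proposition~\ref{thm:t3}, Proposition~\ref{prop:moe-breaks-t3} then applies approximately, with an $O(\epsilon)$ leakage term added to the recursion above.

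\textbf{Step 5 (mutual information).} The routing error bound gives $P(k^*=k\mid c=k)\ge 1-\epsilon$. By Fano's inequality, $\operatorname{MI}(I;K) \ge \ln K - h(\epsilon) - \epsilon\ln(K-1) > 0$ for small $\epsilon$.

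\textbf{Step 6 (mediation).} ``Mediates'' cannot be a theorem in the causal sense. I would make it precise as follows: $R$ is a function of $(e_k)_k$, since cosine similarity satisfies $1-R_k = O(e_k^2/\|c_k\|^2)$. Moreover, $e_k$ is controlled by the leakage $\epsilon$ through the recursion, and $\epsilon$ is in turn controlled by $\operatorname{MI}$ via Fano. Together these give a bound $1-R \le F(\ln K - \operatorname{MI}) + G(\turnover(1-\pInject))$ with $F, G$ increasing. The dependence of $R$ on routing then passes through $\operatorname{MI}$. Under binding destruction $\operatorname{MI} \to 0$, every unit sees the mixture, UA holds, and Proposition~\ref{thm:t3} forces collapse.

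\textbf{Main obstacle.} I expect Step 4 to be the hardest, because routing and content are coupled. Replacement noise perturbs $\bar{z}_k$, but the router uses $\mu_k$, which is built from inputs. So the induction only needs $\mu_k$ to be stable, and that must be shown under a possibly nonstationary schedule. I would first try a stopping-time argument: condition on the event that all $\mu_k$ remain in their $\Delta/2$-balls, and bound the exit probability by a concentration inequality for EMAs. If that fails, I would state correct routing after a burn-in time as a hypothesis, in the same spirit as the first qualification after Proposition~\ref{prop:moe-breaks-t3}.
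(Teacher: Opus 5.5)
The paper gives no proof of this proposition. It states it as a hypothesis defined by the two falsification signatures (S1) and (S2), and the only analytic result nearby is Proposition~\ref{prop:drift} with Corollary~\ref{cor:floor}. Your route therefore has to stand on its own. The place where it fails to deliver the statement is the link that Steps 4--6 are meant to supply.

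\textbf{The seeding--routing--MI link.} You observe yourself that the router $\mu_k$ averages \emph{inputs}. It follows that, for a given input stream and router initialisation, three things are identical whether seeding is on, off, or noise-seeded: the routing decisions, the violation of UA-1, and the mutual information of Step 5. Loop~2 plays no role in any of them. So ``combined with'' reduces to a conjunction of two independent facts, and Step 6 cannot give mediation of the seeding effect.

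Your own bound shows this. In $1-R \le F(\ln K-\mathrm{MI}) + G(\lambda(1-\rho))$, seeding enters only through $G$, a channel parallel to MI. Under hard routing, MI is fixed by the router (at $\ln K$ when routing is exact) while $R$ moves with $\rho$. That is exactly the failure condition written into (S2), and your model predicts that (S1) fails as well. This is consistent with the paper's own remark in Section~\ref{sec:res-p1}: with hard fixed routing, mediation runs through $S_{\text{ctx}}$ rather than routing MI. Likewise, what seeding counters in your recursion is dilution toward the re-initialisation mean, not the collapse of Proposition~\ref{thm:t3}, which routing alone already excludes.

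To get a provable statement you have two options. Either take a content-based mediator (your per-group deviation, or $S_{\text{ctx}}$), for which Steps 1--3 supply the bound. Or weaken the last sentence to ``the routing contribution to $R$ passes through MI.''

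\textbf{Fano in Step 6.} Step 6 runs Fano backwards. The inequality $H(C\mid K^{*}) \le h(\epsilon)+\epsilon\ln(K-1)$ lower-bounds MI given $\epsilon$, as you correctly use in Step 5. It cannot upper-bound $\epsilon$ given MI. For $F(\ln K-\mathrm{MI})$ you need a reverse inequality for the MAP decoder, e.g.\ $P_e \le 1-e^{-H(C\mid K^{*})}$. You also need a relabelling of experts, because MI is invariant under permuting expert labels while $e_k$ is indexed by the identity assignment.

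\textbf{Steps 1--3.} These also need repair before they can carry the $G$ term.
\begin{enumerate}
  \item The recursion in $\mathbb{E}\lVert\cdot\rVert$ does not follow from the three operations. The triangle inequality over a random surviving subset, random re-initialisations and noisy inputs yields per-unit deviations and $\mathbb{E}\lVert\xi-c_k\rVert$, not the deviation of the mean. These are within-group dispersion terms that your two-variable system lacks.
  \item Recording averages the \emph{firing} set, not the group. This adds $\gamma\lVert b_t\rVert$ to the $d_k$ row, the same firing-set bias Proposition~\ref{prop:drift} is forced to carry.
  \item Recording happens only when $k$ is selected, so the $d_k$ row needs the factor $\pi_k$.
\end{enumerate}
Passing to $\lVert\mathbb{E}\bar z_k-c_k\rVert$ makes the recursion essentially exact and linear, but it discards what $R$ measures. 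Cosine is scale-invariant, so unless the re-initialisation mean has a component orthogonal to $c_k$, the mean-field fixed point is parallel to $c_k$ and gives $R=1$ for every $\rho$. The $\rho$-dependence of $R$ then lives in the fluctuations.

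You therefore need a second-moment quantity, which is what $V_t$ in Proposition~\ref{prop:drift} tracks; it is measured there against $m_k$, so no correct-routing hypothesis is needed. In any case, $1-R_k\approx\lVert\delta_\perp\rVert^2/(2\lVert c_k\rVert^2)$ requires $\mathbb{E}\lVert\delta\rVert^2$, not $(\mathbb{E}\lVert\delta\rVert)^2$. Once the noise and bias terms are kept, the floor does not vanish as $\rho\to1$. Only the turnover contribution is rescaled by $(1-\rho)$, and a residual like the $\alpha S+\gamma B$ part of Corollary~\ref{cor:floor} survives.

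\textbf{Step 4.} Your main obstacle is milder than you fear under your own bounded-noise hypothesis. Take noise radius $r<\Delta/4$ and Euclidean nearest-centroid routing. Then the set on which every $\mu_j$ lies within $r$ of $c_j$ is forward-invariant, because each router update is a convex combination of points in that ball. Routing is therefore exactly correct ($\epsilon=0$) under any schedule. A concentration or stopping-time argument is needed only for unbounded noise.
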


This proposition yields two observable signatures that constitute its
falsification criteria:
\begin{enumerate}
  \item \textbf{Routing specialization (S1)}: With DM active,
    $\operatorname{MI}(I; K)$ should increase significantly compared to
    random-seeded controls. If $\operatorname{MI}$ does not increase, the
    bridge from discrete routing to persistent memory is broken.
  \item \textbf{Mediation (S2)}: Representation quality $R$ should
    correlate with binding quality across conditions (Spearman
    $\rho(R, \text{MI}) > 0$). If $R$ improves without corresponding
    $\operatorname{MI}$ increase, then Proposition~\ref{prop:moe-breaks-t3}
    is not the operative mechanism.
\end{enumerate}

\subsection{Triple-Loop as Fixed-Point Maintenance}
\label{sec:dm-theory}

Under stochastic turnover with rate $\turnover$, the expected lifetime of any
individual unit's content is $\sim 1/\turnover$ steps. Without DM, a unit's
content is drawn from the correct expert centroid only if it has survived
long enough to converge (requiring $\gg 1/\lrContent$ steps). With DM seeding at
injection rate $\pInject$, replaced units start at $m_k \approx c_k$ rather than
at random, effectively resetting the convergence clock.

The triple-loop creates a population-level fixed point: even as individual
units are destroyed and replaced, the distribution of content vectors within
each expert group remains centered on $m_k$. This fixed point is maintained
when the seeding rate exceeds the dissipative drift:
$\pInject \cdot \turnover > \alpha_{\text{drift}}$,
where $\alpha_{\text{drift}}$ is the effective content drift rate.

The condition above is heuristic: $\alpha_{\text{drift}}$ is not defined
independently of the quantity it is meant to bound, and the symbol collides
with the content rate $\lrContent$. We therefore state the maintenance
property directly, as a drift bound on an explicit Lyapunov candidate.

Let
\begin{equation}
  V_t \;=\; \sum_{k=1}^{K}\;\sum_{i \in G_k}
            \bigl\| z_i^{(t)} - m_k^{(t)} \bigr\|^{2}
  \label{eq:lyap}
\end{equation}
denote the total within-expert dispersion of content vectors about their
recorded centroids. $V_t$ is a function of the internal state alone; no
ground-truth centroid enters it.

\begin{proposition}[Triple-loop drift bound]
\label{prop:drift}
Assume \textup{(i)} the expert partition $\{G_k\}$ is fixed over the step
(discrete routing, Section~\textup{\ref{sec:moe}}); \textup{(ii)} each unit is
replaced independently with probability $\turnover$; \textup{(iii)} the
re-initialisation $\xi$ has finite second moment; and \textup{(iv)}
$\lrContent \in [0,1]$ and $\lrRecord \in (0,1]$. Write $A_t$ for the units
active at step $t$, $k^{*}$ for the selected expert, $n = \lvert G_{k^*}\rvert$,
and
\[
  V_t^{A} = \!\!\sum_{i \in A_t}\!\lVert z_i - m_{k(i)}\rVert^{2},\quad
  S_t = \!\!\sum_{i \in A_t}\!\lVert \bar{x}_i - m_{k(i)}\rVert^{2},\quad
  b_t = \bar{z}_{F_{k^*}} - \bar{z}_{G_{k^*}},\quad
  D_t^{+} = \sum_{i}\mathbb{E}\lVert \xi - m^{+}_{k(i)}\rVert^{2},
\]
where $b_t$ is the \emph{firing-set bias} --- the offset between the mean
content of the units that fired and the mean over the whole group --- and
$m^{+}$ is the centroid \emph{after} the recording step. Then
\begin{equation}
  \mathbb{E}\bigl[V_{t+1} \mid \mathcal{F}_t\bigr]
  \;\le\;
  (1-\turnover)\Bigl(V_t - \lrContent V_t^{A} + \lrContent S_t
    + \lrRecord\, n\, \lVert b_t\rVert^{2}\Bigr)
  \;+\; \turnover\,(1-\pInject)\,D_t^{+}.
  \label{eq:drift}
\end{equation}
\end{proposition}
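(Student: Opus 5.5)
The plan is to follow the order of operations in Algorithm~\ref{alg:cycle}: content update, then recording, then replacement. I will bound $V$ after each sub-step and compose the three bounds. Taking $\mathbb{E}[\cdot\mid\mathcal{F}_t]$ is only needed at the last stage. Given $\mathcal{F}_t$, the first two sub-steps are deterministic functions of the state and the current input, and randomness enters only through replacement. The one tool used throughout is convexity of the squared norm: for $\theta\in[0,1]$,
\[
\lVert(1-\theta)u+\theta v\rVert^{2}\le(1-\theta)\lVert u\rVert^{2}+\theta\lVert v\rVert^{2}.
\]

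\textbf{Step 1 (content update).} For an active unit $i\in A_t$,
\[
z_i'-m_{k(i)}=(1-\lrContent)(z_i-m_{k(i)})+\lrContent(\bar{x}_i-m_{k(i)}).
\]
Convexity with $\theta=\lrContent$ (assumption (iv)) gives
\[
\lVert z_i'-m_{k(i)}\rVert^{2}\le(1-\lrContent)\lVert z_i-m_{k(i)}\rVert^{2}+\lrContent\lVert\bar{x}_i-m_{k(i)}\rVert^{2}.
\]
Inactive units and the centroids are unchanged. Summing over all units gives the intermediate dispersion $V'\le V_t-\lrContent V_t^{A}+\lrContent S_t$. Assumption (i) is what makes $k(i)$ well defined and unchanged during the step.

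\textbf{Step 2 (recording).} Only $m_{k^*}$ moves, to $m^{+}=m+\lrRecord(\bar{z}_{F}-m)$, where $\bar z_F$ is the mean over the firing set $F=F_{k^*}$ computed from the \emph{updated} contents $z'$. For group $G=G_{k^*}$, I will use the bias--variance decomposition
\[
\sum_{i\in G}\lVert z_i'-c\rVert^{2}=\sum_{i\in G}\lVert z_i'-\bar{z}_G\rVert^{2}+n\lVert\bar{z}_G-c\rVert^{2},
\]
taking $c=m^{+}$. The deviation of the group mean from the new centroid decomposes as
\[
\bar{z}_G-m^{+}=(1-\lrRecord)(\bar{z}_G-m)-\lrRecord\, b_t,
\]
since $\bar z_G-\bar z_F=-b_t$. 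Convexity with $\theta=\lrRecord$ then bounds $n\lVert\bar{z}_G-m^{+}\rVert^{2}$ by $(1-\lrRecord)\,n\lVert\bar{z}_G-m\rVert^{2}+\lrRecord\, n\lVert b_t\rVert^{2}$. This is at most $n\lVert\bar{z}_G-m\rVert^{2}+\lrRecord n\lVert b_t\rVert^{2}$. Reassembling with the same decomposition at $c=m$ yields
\[
V''\le V'+\lrRecord\, n\,\lVert b_t\rVert^{2}.
\]
Discarding the $(1-\lrRecord)$ factor costs nothing for the stated bound.

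\textbf{Step 3 (replacement).} Under assumption (ii), each unit independently survives with probability $1-\turnover$, keeping its term $\lVert z_i''-m^{+}_{k(i)}\rVert^{2}$. With probability $\turnover\pInject$ it is seeded with $m^{+}_{k(i)}$, contributing exactly $0$. With probability $\turnover(1-\pInject)$ it receives $\xi$, contributing $\mathbb{E}\lVert\xi-m^{+}_{k(i)}\rVert^{2}$, which is finite by (iii). By linearity of conditional expectation, summing over $i$ gives $\mathbb{E}[V_{t+1}\mid\mathcal{F}_t]=(1-\turnover)V''+\turnover(1-\pInject)D_t^{+}$. Substituting the bounds from Steps~1--2 gives \eqref{eq:drift}.

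\textbf{Main obstacle.} I expect the main obstacle to be Step~2, and specifically bookkeeping rather than inequalities. The bias $b_t$, $\bar z_F$, and $D_t^{+}$ must all be read with the post-update contents and the post-recording centroid. Otherwise the cross term between $\bar z_G-m$ and $b_t$ does not vanish cleanly. I will state this convention explicitly.

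A second point needs care in Step~3. The replacement decision (Bernoulli$(\turnover)$, Bernoulli$(\pInject)$ and $\xi$) must be independent of the state once we condition on $\mathcal{F}_t$ and the current input. In the running system replacement is actually triggered by energy depletion, so assumption (ii) is exactly where that dependence is idealised away, and I will flag this.
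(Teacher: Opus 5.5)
Your proposal is correct and follows the paper's proof. It uses the same three sub-steps in the order of Algorithm~\ref{alg:cycle}, the same convexity bound for the content update, and the same expectation over replacement indicators measured against the post-recording centroid $m^{+}$. The one difference is cosmetic, in Step~2: you apply convexity of $\lVert\cdot\rVert^{2}$ directly to $(1-\lrRecord)(\bar{z}_G-m)-\lrRecord b_t$, whereas the paper writes the exact identity for $\Delta V$ with the signed cross term $-2\lrRecord(1-\lrRecord)\,n\langle\delta,b\rangle$ and absorbs it with unit-weight Young's inequality. These are the same inequality; the paper's form merely makes visible that recording alone need not decrease $V$.
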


\begin{proof}[Proof sketch]
The loops act in the order given by Algorithm~\ref{alg:cycle}: content
update, then recording, then replacement.

\emph{Content update} is a convex combination for $\lrContent \in [0,1]$, so
$\lVert(1-\lrContent)z_i + \lrContent\bar{x}_i - m_k\rVert^{2}
 \le (1-\lrContent)\lVert z_i - m_k\rVert^{2}
   + \lrContent\lVert \bar{x}_i - m_k\rVert^{2}$,
contributing $-\lrContent V_t^{A} + \lrContent S_t$ on the active set.

\emph{Recording} moves $m_{k^*}$ toward the mean of the \emph{firing} units,
not of the whole group. Writing $\delta = \bar{z}_{G_{k^*}} - m_{k^*}$ and
$b = b_t$, so that $\bar{z}_{F_{k^*}} - m_{k^*} = \delta + b$, substitution
into Eq.~\eqref{eq:lyap} gives exactly
\[
  \Delta V
  = -\lrRecord(2-\lrRecord)\,n\lVert\delta\rVert^{2}
    - 2\lrRecord(1-\lrRecord)\,n\langle \delta, b\rangle
    + \lrRecord^{2}\, n \lVert b\rVert^{2}.
\]
The cross term has no fixed sign, so recording is \emph{not} in general a
descent step on $V$: when the firing set is unrepresentative of its group,
recording can increase the dispersion. Applying Young's inequality to the
cross term with unit weight and using $\lrRecord \le 1$ gives
\[
  \Delta V \;\le\; -\lrRecord\, n \lVert\delta\rVert^{2}
                   + \lrRecord\, n \lVert b\rVert^{2},
\]
so recording has the same contraction-plus-injection form as the other two
loops, with the firing-set bias supplying its injection term. Discarding the
non-positive contraction leaves the $\lrRecord\, n \lVert b_t\rVert^{2}$ term
in Eq.~\eqref{eq:drift}.

\emph{Replacement} deletes the term of each replaced unit and redraws it: with
probability $\pInject$ the unit is set to $m^{+}_{k}$ and contributes zero,
otherwise it contributes $\mathbb{E}\lVert \xi - m^{+}_{k}\rVert^{2}$. Taking
expectations over the replacement indicators gives the factor $(1-\turnover)$
on surviving terms and $\turnover(1-\pInject)D_t^{+}$ on redrawn ones. Note
that $D_t^{+}$ is measured against the post-recording centroid, which is the
one seeded units actually receive.
\end{proof}

\begin{corollary}[Seeding rescales the turnover noise floor]
\label{cor:floor}
Suppose in addition that the active set carries at least a fraction $\beta$ of
the dispersion, $V_t^{A} \ge \beta V_t$; that $S_t \le S$, $D_t^{+} \le D$ and
$n\lVert b_t\rVert^{2} \le B$ uniformly in $t$; and that
$q = (1-\turnover)(1-\lrContent\beta) < 1$. Then
\begin{equation}
  \limsup_{t \to \infty} \mathbb{E}[V_t]
  \;\le\;
  \frac{(1-\turnover)\bigl(\lrContent S + \lrRecord B\bigr)
        \;+\; \turnover\,(1-\pInject)\,D}{1 - q}.
  \label{eq:floor}
\end{equation}
\end{corollary}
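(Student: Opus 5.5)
The plan is to reduce Proposition~\ref{prop:drift} to a scalar linear recursion for $u_t = \mathbb{E}[V_t]$ and then iterate it. Start from Eq.~\eqref{eq:drift} and substitute the additional hypotheses term by term. The active-set assumption $V_t^{A} \ge \beta V_t$ gives $V_t - \lrContent V_t^{A} \le (1-\lrContent\beta)V_t$. This is legitimate because $\lrContent \ge 0$, so subtracting a larger quantity only decreases the bound. The uniform bounds $S_t \le S$, $n\lVert b_t\rVert^2 \le B$ and $D_t^{+} \le D$ then replace the remaining injection terms by constants. This yields, almost surely,
\[
  \mathbb{E}\bigl[V_{t+1}\mid\mathcal{F}_t\bigr] \;\le\; q\,V_t + C,
  \qquad C = (1-\turnover)\bigl(\lrContent S + \lrRecord B\bigr) + \turnover(1-\pInject)D .
\]
Here $q = (1-\turnover)(1-\lrContent\beta)$, exactly as in the statement. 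The coefficients $(1-\turnover)$ and $\turnover(1-\pInject)$ are nonnegative, since $\turnover,\pInject \in [0,1]$ are probabilities. Hence each upper-bound substitution preserves the direction of the inequality.

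Next, take full expectations using the tower property to obtain $u_{t+1} \le q\,u_t + C$. For this to be meaningful, $u_0 = \mathbb{E}[V_0]$ should be finite. I would note this as implicit, or deduce it from assumption (iii) on the initial random content together with finite initial centroids. Unrolling the recursion gives
\[
  u_t \le q^t u_0 + C\sum_{s=0}^{t-1} q^s \le q^t u_0 + \frac{C}{1-q}.
\]
One also needs $q \ge 0$, which holds because $\lrContent\beta \le 1$ when $\lrContent \le 1$ and $\beta \le 1$. Note that $\beta \le 1$ is automatic whenever $V_t > 0$, since $V_t^{A} \le V_t$. Combined with the hypothesis $q<1$, this gives $q^t u_0 \to 0$. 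Taking $\limsup_{t\to\infty}$ then yields Eq.~\eqref{eq:floor}.

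The only step needing care is the first substitution, $V_t - \lrContent V_t^{A} \le (1-\lrContent\beta)V_t$. It must be applied inside the $(1-\turnover)$ factor, and one must check that it is used pathwise, before any expectation is taken, since $\beta$ is assumed to bound the ratio at every $t$. A second subtlety concerns the uniform bounds. If $S$, $B$ and $D$ were only intended to bound the expectations $\mathbb{E}S_t$, $\mathbb{E}[n\lVert b_t\rVert^2]$ and $\mathbb{E}D_t^{+}$ rather than the pathwise quantities, the argument should be rerun after taking full expectations first. The same recursion $u_{t+1}\le q u_t + C$ then follows by linearity, so either reading gives the result. I would phrase the proof in this expectation form to cover both cases.
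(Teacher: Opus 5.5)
Your proposal is correct and is the derivation the paper intends; the paper states Corollary~\ref{cor:floor} without a written proof. Substitute $V_t^{A}\ge\beta V_t$ and the uniform bounds into Eq.~\eqref{eq:drift} to get $\mathbb{E}[V_{t+1}\mid\mathcal{F}_t]\le qV_t+C$, take full expectations, unroll the geometric recursion, and pass to the $\limsup$. The side conditions you flag are needed and are left unstated in the paper: $q\ge 0$ so the recursion can be chained, and $\mathbb{E}[V_0]<\infty$ so that $q^t\,\mathbb{E}[V_0]\to 0$.
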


Three features of Eq.~\eqref{eq:floor} bear on the experiments that follow.
First, the turnover term enters multiplied by $(1-\pInject)$: seeding does not
remove the perturbation caused by replacement, it rescales it, and as
$\pInject \to 1$ that term vanishes. This is the sense in which the loop
maintains a fixed point --- not by preventing turnover, but by setting what
turnover costs. Second, recording carries a floor of its own, $\lrRecord B$,
governed by how unrepresentative the firing set is of its group: recording
stabilises only to the extent that the units which fire speak for the units
that do not. Third, the two terms are not obtainable from one another, which
is the analytical counterpart of the two loops' non-interchangeability in the
ablation of Section~\ref{sec:res-ablation}.

We verified Eq.~\eqref{eq:drift} numerically over a grid of
$(\lrContent, \lrRecord, \turnover, \pInject)$ values, drawing the firing set
as a strict random subset of the active group at each step and evaluating the
conditional expectation by resampling the replacement indicators; the
inequality held at every step in every cell tested. As a positive control we
checked that the same procedure rejects the bound one obtains by treating
recording as if it used the whole-group mean: a two-unit group with a single
firing unit and $\lrRecord = 1$ raises $V$ from $50$ to $100$, while that
bound predicts a decrease. The $(1-\pInject)$ scaling predicted by
Eq.~\eqref{eq:floor} is present in the measured data: across the $(K,p)$ sweep
of Section~\ref{sec:res-envelope}, dispersion decreases monotonically as
$\pInject$ increases (Spearman $\rho = -0.86$, $p < 10^{-15}$, $n = 50$).

\paragraph{What the bound does not establish.} Eq.~\eqref{eq:drift} is a
Foster--Lyapunov drift condition, not a complete stability theory. It bounds
the expected dispersion; it does not construct an invariant measure, prove
ergodicity, or characterise the transient. Assumption \textup{(i)} in
particular excludes the regime in which routing itself is unstable, which is
where we would expect the analysis to be hardest.

This framework yields three falsifiable predictions:
\begin{enumerate}
  \item If per-cycle random permutation of expert assignments (destroying
    binding while preserving partition structure) produces $\text{MI}$
    comparable to stable binding, then binding is \emph{not} the causal
    mechanism.
  \item If one-shot seeding (a single injection at the start) achieves
    reconstruction quality comparable to continuous seeding, then the
    closed-loop property is \emph{not} necessary.
  \item If representation quality $R$ does not vary systematically with
    $(K, \pInject)$, then the operating envelope claim is falsified.
\end{enumerate}

\section{Results}
\label{sec:results}

\subsection{Removing Discrete Routing Removes Specialization}
\label{sec:res-causality}

To test whether stable context--expert binding is necessary for
structural specialization, we compare
three conditions: (1)~\textbf{Full MoE}: stable deterministic expert
assignment; (2)~\textbf{Baseline}: no expert routing (all units active in
all contexts); and (3)~\textbf{Binding disruption} (per-cycle random
permutation): expert groups are randomly reassigned at each computation
cycle. This multi-factor stress test ($n = 91$ runs, 14 seeds across two
experimental blocks) simultaneously disrupts temporal binding, evidence
accumulation, and centroid tracking. It serves as a \emph{necessary
condition test}: if specialization survives binding disruption, stable
binding would be unnecessary. It is not sufficient for causal
identification, as the intervention perturbs multiple coupled mechanisms.

Full MoE produces strong structural specialization: firing selectivity
$f_{\text{sel}} = 0.959$ (vs.\ $0.051$ at baseline; $\Delta = +0.908$),
silhouette score $0.166$ (vs.\ $0.121$; $\Delta = +0.045$), and
representation divergence $\Delta R = 0.467$ (vs.\ $-0.092$).
Under binding disruption, mutual information
drops from $\text{MI} = 1.099$ (Full; equal to the theoretical maximum
$\ln K$) to $\text{MI} = 0.001$, confirming that the disrupted system
cannot maintain context--expert information
(Figure~\ref{fig:causality}).

The disrupted condition's silhouette ($0.118$) is statistically equivalent
to baseline ($0.121$; equivalence margin $\epsilon = 0.009$), confirming
that partition structure without stable binding provides no specialization
benefit. We conclude that \emph{stable context--expert binding is required}
for structural specialization in this architecture. Full causal
identification---isolating binding from co-occurring mechanisms such as
evidence accumulation and load balancing---requires single-factor ablations,
which we identify as a priority for future confirmatory work.

\begin{figure}[t]
  \centering
  \includegraphics[width=\linewidth]{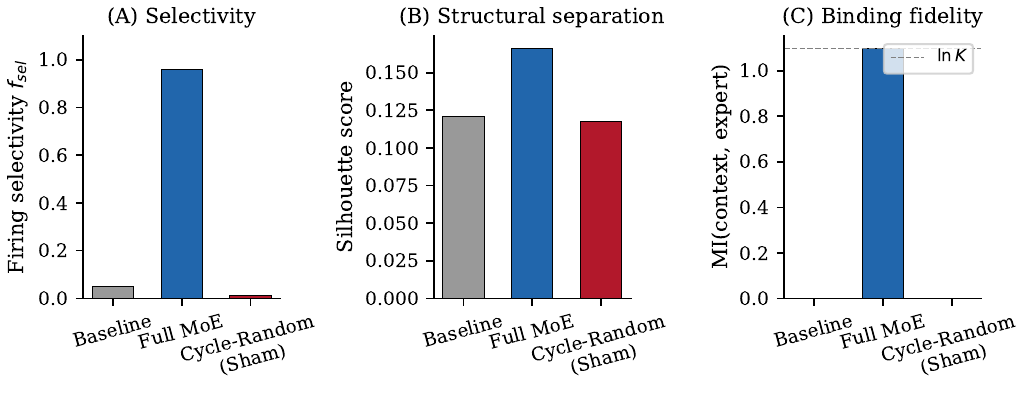}
  \caption{Removing discrete routing removes structural
    specialization. (A)~Firing selectivity: Full MoE achieves near-perfect
    context-exclusive firing ($f_{\text{sel}} = 0.959$); binding disruption
    collapses to baseline. (B)~Structural separation via silhouette score.
    (C)~Mutual information between context and expert assignment: Full MoE
    achieves the theoretical maximum ($\ln K$); per-cycle random permutation
    eliminates binding ($\text{MI} = 0.001$). $n = 91$ runs, 14 seeds.}
  \label{fig:causality}
\end{figure}

\subsection{Deep Memory Creates Persistent Representations}
\label{sec:res-dm1}

Having established what a correct expert assignment buys, we test
whether DM produces persistent, context-specific representations that
survive stochastic turnover. Four conditions are compared ($n = 16$ runs,
4 seeds): (1)~\textbf{Full DM} with correct expert--context mapping;
(2)~\textbf{Global control} without expert-specific routing;
(3)~\textbf{Mismatched write} where recording uses the wrong expert
mapping; and (4)~\textbf{Noise write} where centroids are corrupted before
storage.

Full DM achieves $R = 0.984$ with remarkable consistency across all
contexts ($R_{c_0} = 0.984$, $R_{c_1} = 0.984$, $R_{c_2} = 0.985$).
The global control, which records a single centroid across all contexts,
produces $R = 0.385$---marginally above chance and far below the DM
condition ($\Delta = +0.599$). These results are displayed in
Figure~\ref{fig:dm1}.

The failure modes provide mechanistic insight. Mismatched write produces
$R = -0.003$, indicating that incorrect expert--centroid mapping causes
\emph{catastrophic} failure rather than graceful degradation: reborn units
are seeded with the wrong context's content, actively interfering with
representation maintenance. Noise write produces $R = 0.935$, showing
partial degradation proportional to corruption magnitude. This
asymmetry---catastrophic failure under mapping error versus graceful
degradation under noise---is a \emph{mechanism fingerprint} consistent with
expert-specific recording being the functional locus.

\begin{figure}[t]
  \centering
  \includegraphics[width=0.65\linewidth]{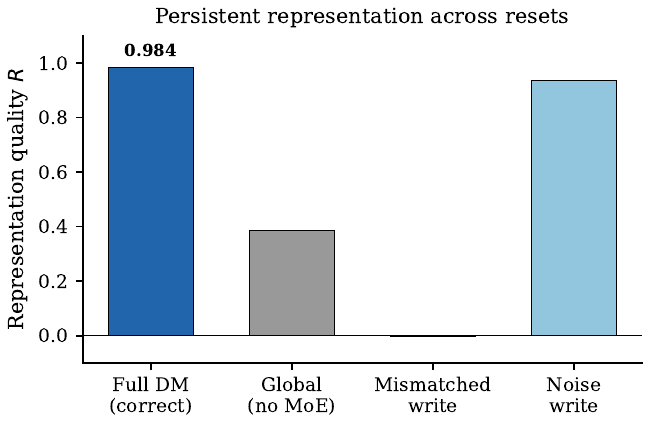}
  \caption{Deep Memory creates persistent representations. Full DM with
    correct expert mapping achieves $R = 0.984$, consistent across all
    contexts. Global control ($R = 0.385$) lacks context specificity.
    Mismatched write ($R \approx 0$) produces catastrophic failure;
    noise write ($R = 0.935$) degrades gracefully. $n = 16$ runs,
    4 seeds.}
  \label{fig:dm1}
\end{figure}

\subsection{Continuous Seeding Enables Functional Reconstruction}
\label{sec:res-dm2}

DM's utility depends not only on maintaining representations during
stable operation, but on \emph{reconstructing} them after disruption.
We test this by introducing an interference phase that degrades
representations, then measuring recovery under five seeding strategies
($n = 30$ runs, 5 seeds): (1)~\textbf{Continuous seeding};
(2)~\textbf{Noise seeding} (random vectors instead of stored centroids);
(3)~\textbf{One-shot seeding} (single injection at reconstruction start);
(4)~\textbf{Baseline} (no DM); and (5)~\textbf{Wrong-expert seeding}
(centroids from mismatched experts).

Continuous seeding achieves near-complete reconstruction:
$R_{\text{recon}} = 0.978$ (95\% CI $[0.977, 0.979]$), recovering from a post-interference baseline
of $R = 0.290$ ($\Delta = +0.688$). This confirms that DM functions as a
restorative mechanism, not merely a maintenance one
(Figure~\ref{fig:dm2}).

The critical negative control is one-shot seeding: $R_{\text{recon}} =
0.305$ (95\% CI $[0.280, 0.314]$), statistically indistinguishable from the no-DM baseline. This
confirms that a single injection is insufficient---the content averaging
rule washes out any one-time seed within approximately $1/\turnover$ turnover
cycles. \emph{One-shot seeding does not suffice.}

Noise seeding achieves intermediate performance ($R_{\text{recon}} = 0.712$, 95\% CI $[0.705, 0.722]$),
demonstrating a content-specificity gap of $\Delta = 0.266$ (95\% CI $[0.258, 0.272]$, paired by seed) between
DM-seeded and noise-seeded conditions. This gap confirms that stored
centroids carry discriminative information beyond generic initialization.
Wrong-expert seeding produces $R_{\text{recon}} = 0.125$, \emph{worse}
than baseline ($\Delta = -0.165$), confirming that incorrect expert--content
alignment is actively harmful.

\begin{figure}[t]
  \centering
  \includegraphics[width=0.75\linewidth]{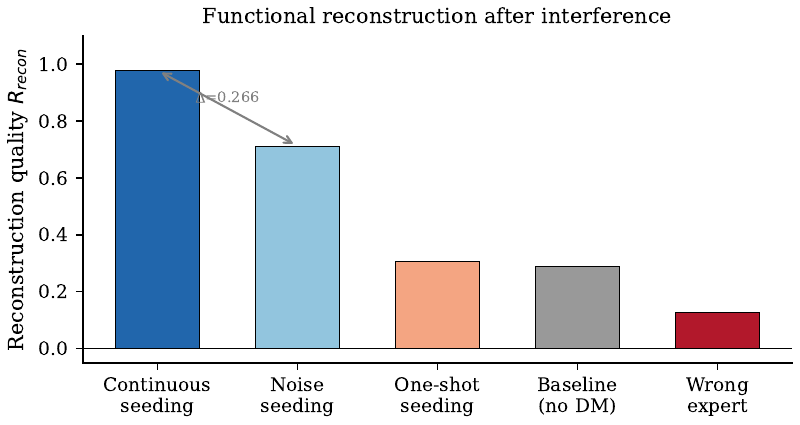}
  \caption{Functional reconstruction requires continuous seeding and
    expert--content alignment. Continuous seeding achieves near-complete
    recovery ($R_{\text{recon}} = 0.978$); one-shot seeding fails
    ($R_{\text{recon}} = 0.305 \approx$ baseline). Noise seeding recovers
    partially ($0.712$; content-specificity gap $= 0.266$). Wrong-expert
    seeding is harmful ($0.125 <$ baseline). $n = 30$ runs, 5 seeds.}
  \label{fig:dm2}
\end{figure}

\subsection{Operating Envelope and Degradation Boundaries}
\label{sec:res-envelope}

We use \emph{regime} and \emph{boundary} here in an operational sense. The
regions reported below are separated by thresholds on measured
representation quality, not by an order parameter or by a demonstrated
change in the qualitative character of the dynamics; we make no claim of a
phase transition in the statistical-mechanical sense.

DM's practical utility requires understanding \emph{where} it works and
where it fails. We sweep the expert count $K$ and injection rate $\pInject$
across 10 operating points ($n = 115$ runs, 5 seeds), classifying each
as \textbf{Pass} ($R > 0.70$ and all four quality criteria met),
\textbf{Degraded} (positive but reduced $R$), or \textbf{Fail}
(insufficient lift over baseline).

Figure~\ref{fig:envelope} shows the results. At full injection ($\pInject = 1.0$),
DM maintains high quality across expert counts: $R = 0.980$ at $K = 3$,
$R = 0.964$ at $K = 5$, and $R = 0.882$ at $K = 8$. Reducing injection
rate reveals a degradation boundary: $\pInject = 0.25$ is sufficient at $K = 3$
($R = 0.837$, Pass), but $\pInject = 0.10$ produces degraded output
($R = 0.568$) and $\pInject = 0.05$ degrades further ($R = 0.459$). The single
Fail condition ($K = 3$, block size $= 10$, $\pInject = 0.1$) has delta too
small to distinguish from baseline despite acceptable absolute $R$.

Overall: 7/10 conditions Pass, 2/10 are Degraded but functional, 1/10
Fails. Higher expert counts require correspondingly higher injection rates
to maintain quality, with the degradation boundary depending on the
interaction between $K$ and $\pInject$.

\begin{figure}[t]
  \centering
  \includegraphics[width=0.8\linewidth]{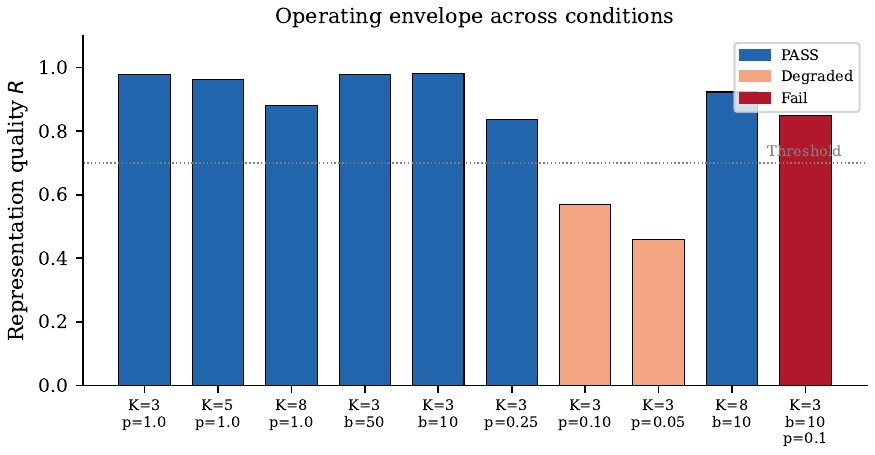}
  \caption{Operating envelope across expert count $K$, injection rate $\pInject$,
    and block size. Blue: Pass (7/10); orange: Degraded (2/10); red: Fail
    (1/10). Low $p$ and high $K$ define the degradation boundary. The
    threshold line at $R = 0.70$ separates Pass from sub-threshold
    regimes. $n = 115$ runs, 5 seeds.}
  \label{fig:envelope}
\end{figure}

\subsection{Scheduling Invariance and the $K \times p$ Regime Map}
\label{sec:res-scheduling}

A memory mechanism useful in practice must be robust to the temporal
structure of context presentation. We test DM under five qualitatively
different scheduling patterns ($n = 75$ runs, 5 seeds): uniform block,
Markov sticky (high self-transition probability), random i.i.d., bursty
Zipf (power-law context frequency), and session restart (periodic grid
reinitialization).

All five schedulers produce Pass-quality representations, with $R_{\text{recon}}$
ranging from $0.83$ to $0.89$ (Figure~\ref{fig:robustness}A). The
no-DM baseline shows much greater scheduler sensitivity ($R$ range:
$0.28$--$0.56$), indicating that DM not only improves absolute quality but
also \emph{stabilizes} it across scheduling regimes.

A separate $K \times p$ regime map ($n = 235$ runs; $K \in \{3, 5, 8\}$,
$p \in \{0.1, 0.25, 0.5\}$, two schedulers) confirms the envelope
boundaries at larger scale (Figure~\ref{fig:robustness}B). At $p \geq 0.5$,
all $(K, \pInject)$ cells are Pass regardless of scheduler or expert count. At
$\pInject = 0.25$, cells with $K \leq 5$ remain Pass while $K = 8$ degrades.
At $\pInject = 0.10$ with $K \geq 8$, the system Fails. This structure is
stable across both schedulers tested, confirming that the operating
boundaries are intrinsic to the $(K, \pInject)$ interaction rather than artifacts
of a specific scheduling regime.

\begin{figure}[t]
  \centering
  \includegraphics[width=\linewidth]{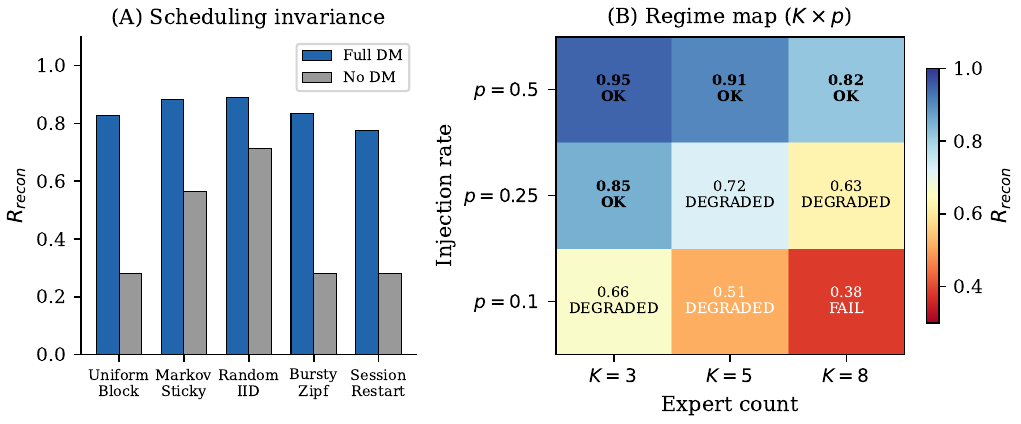}
  \caption{DM is robust to scheduling and exhibits a stable regime map.
    (A)~Scheduling invariance: Full DM (blue) achieves Pass quality under
    all five scheduling patterns; no-DM baseline (gray) shows high
    variability. $n = 75$ runs, 5 seeds.
    (B)~Regime map across $K$ and $p$: dark cells indicate high $R$
    (Pass); light cells indicate degradation or failure. The boundary
    the interaction between $K$ and $\pInject$ is consistent across schedulers.
    $n = 235$ runs.}
  \label{fig:robustness}
\end{figure}

\subsection{Component Ablation}
\label{sec:res-ablation}

To identify the minimal mechanism, we conduct a $2^3$ factorial ablation
over three binary factors ($n = 40$ runs, 5 seeds): recording (DM centroid
update), seeding (centroid injection at replacement), and anchoring
(continuous content pull toward the stored centroid). This yields eight
conditions from no-DM to full DM.

Figure~\ref{fig:ablation} reveals a sharp pattern: the recording $\times$
seeding dyad is the critical mechanism. When both are active, $R = 0.854$
($\Delta = +0.533$ over baseline $R = 0.321$). When either is removed,
performance collapses to baseline: recording-only, seeding-only, and no-DM
all produce $R = 0.321$.

The recording--seeding dyad is
not decomposable in this design: neither component alone is sufficient, but together
they produce the full DM effect. Recording without seeding accumulates
centroids that are never deployed; seeding without recording injects
uninitialized (random) centroids. The anchor component did not reach
significance as an independent factor in the current experimental
regime, though its contribution under alternative operating conditions
remains an open question.

\begin{figure}[t]
  \centering
  \includegraphics[width=0.75\linewidth]{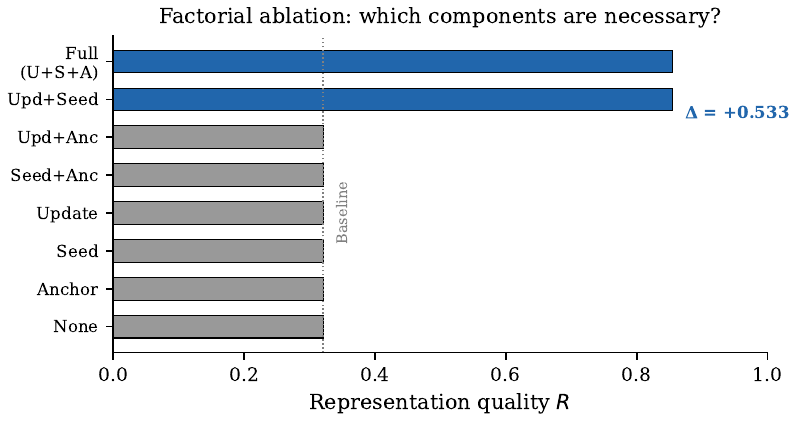}
  \caption{Factorial ablation isolates recording $\times$ seeding as the
    minimal critical dyad. Only conditions with both recording and seeding
    active (bottom two bars) exceed baseline. Single factors and the
    anchor component provide no lift. $\Delta = +0.533$ for the critical
    dyad over baseline. $n = 40$ runs, 5 seeds.}
  \label{fig:ablation}
\end{figure}

\subsection{Confirmatory Ablation and External Baselines}
\label{sec:res-p1}

Having established the core mechanism (Sections~\ref{sec:res-causality}--\ref{sec:res-ablation}),
we conduct six confirmatory experiments addressing three open questions:
single-factor causal isolation, mechanism bridging across scales,
and comparison with associative and reservoir baselines under matched turnover.
Table~\ref{tab:p1-summary} summarizes the design and primary outcomes.

\begin{table}[t]
\centering
\caption{Summary of confirmatory experiments ($n = 370$ runs total).
  $R$: representation fidelity (cosine alignment with ground-truth prototypes).
  $S_{\text{ctx}}$: context silhouette (cosine distance), measuring functional
  separability of representations grouped by context identity.
  Full statistics and per-seed results are provided in the Supplementary Material.}
\label{tab:p1-summary}
\small
\begin{tabular}{@{}llccc@{}}
\toprule
Expt. & Purpose & Runs & Primary metric & Verdict \\
\midrule
E1 & Single-factor causal ablation & 112 & $R$ & Confirmed \\
E2 & Mediation: $S_{\text{ctx}} \to R$ & 60 & $\beta$, $\rho$ & Confirmed \\
E3 & $K$-scaling ($K{=}5, 8$) & 54 & $R$, $S_{\text{ctx}}$ & Confirmed \\
E4 & Turnover-dose $\times$ DM & 36 & $\Delta R$, $\Delta S_{\text{ctx}}$ & Partial \\
E5 & Soft vs.\ hard routing & 48 & $S_{\text{ctx}}$ & Partial \\
E6 & Associative/reservoir baselines & 60 & $R$, $S_{\text{ctx}}$ & Confirmed \\
\bottomrule
\end{tabular}
\end{table}

\paragraph{Single-factor causal ablation.}
Section~\ref{sec:res-ablation} identified the recording--seeding dyad through
factorial ablation. To isolate each component's individual contribution, we
conduct single-factor ablation across eight conditions ($K = 8$, 14 seeds
each, $n = 112$). Five key results emerge: (1)~Full DM achieves
$R = 0.983$; (2)~removing DM entirely yields $R = 0.296$; (3)~disabling
seeding alone (\textsc{Seed\_off}) produces $R = 0.301$, functionally
identical to no-DM ($p = 0.89$), confirming that \emph{seeding is the sole
delivery channel}---recording and anchoring alone are inert under turnover;
(4)~providing mismatched expert centroids (\textsc{DM\_wrong}) yields
intermediate performance ($R = 0.557$), demonstrating content-specificity;
(5)~removing MoE routing reduces $R$ to $0.804$, confirming routing as a
necessary but not sufficient component. Full per-condition statistics are
provided in Supplementary Table~S1.

\paragraph{Functional separability as mediator.}
A natural question is whether DM improves $R$ directly or by maintaining
an intermediate structural property. We define $S_{\text{ctx}}$ (context
silhouette; cosine distance) as a prototype-free measure of functional
separability: representations are grouped by their ground-truth context
identity and scored via silhouette analysis. Across a six-level injection
dose sweep ($n = 60$), $S_{\text{ctx}}$ perfectly predicts $R$
($\rho = 1.0$ across all seeds, mediation $\beta = 0.909$, $t = 14.9$).
Critically, noise-seeding (injecting random centroids at the same rate)
produces $S_{\text{ctx}}$ and $R$ indistinguishable from the no-injection
baseline, confirming that content-specific centroids---not injection
mechanics---drive the effect. This relationship generalizes across scales:
at $K = 5$ and $K = 8$ ($n = 54$), the DM advantage is reproduced
(sign test $p = 0.016$), and $S_{\text{ctx}}$ remains the primary mediator.
Because routing is hard and fixed in this regime, the mediation path
operates through content-state separability rather than routing mutual
information.
Two additional experiments (E4: turnover-dose interaction, E5: soft
vs.\ hard routing) further characterize the mechanism: the DM effect on
$R$ is constant across turnover rates ($\Delta R \approx 0.75$), while
the separability effect ($\Delta S_{\text{ctx}}$) shows a dose interaction,
indicating dual channels of action. Soft (probabilistic) routing degrades
$S_{\text{ctx}}$ through cross-expert interference; details are provided in
Supplementary~S3--S4.

\paragraph{Associative and reservoir baselines under matched turnover.}
To test whether DM's persistence advantage is specific to the dissipative
architecture or achievable by simpler memory systems, we compare against
two alternative memory baselines with matched stochastic turnover at rate
$\turnover$: (1)~a modern Hopfield network \citep{ramsauer2021hopfield}
with softmax-attention retrieval and Hebbian storage, subject to
per-slot probabilistic reset; and (2)~an echo state network (ESN)
\citep{jaeger2001esn} with fixed random reservoir and adaptive linear
readout, subject to per-neuron state and readout weight reset.
Turnover rates are calibrated to match the empirical death rate of the
dissipative grid under standard operating conditions.

Turnover destroys Hopfield memory: $R$ drops from $0.999$ (no turnover)
to $0.474$ (matched turnover), confirming that classical associative
memory is fragile under stochastic replacement.
The full DM system significantly outperforms the Hopfield baseline under
identical turnover ($R = 0.854$ vs.\ $0.474$, $p < 0.001$), and the
DM advantage is confirmed by the within-architecture contrast
($\text{Full DM} - \text{No DM} = +0.749$).
The ESN achieves higher $R$ ($0.943$) due to its supervised readout
objective; however, it uses a fundamentally different learning paradigm
(LMS weight update with direct error signal) and thus serves as a
diagnostic upper bound rather than a fair comparison. Notably, on
functional separability ($S_{\text{ctx}}$), the unsupervised DM system
exceeds the supervised ESN ($0.900$ vs.\ $0.857$), suggesting that DM's
expert-based organization provides structural advantages beyond raw
reconstruction accuracy.
Full implementation details and the ESN regularization analysis are
provided in Supplementary~S5.

\paragraph{What is and is not matched in these comparisons.} Turnover rate
is matched by construction. Three axes are not, and we state the direction
of each. \emph{Learning paradigm}: the ESN readout is fitted by supervised
least-mean-squares against a target, whereas DM is given no target; this
favours the ESN, which is why we read it as an upper bound rather than a
competitor. \emph{Parameter budget}: the three systems allocate storage
differently---DM holds $K$ centroids of dimension $D$, the Hopfield network
holds a slot per stored pattern, and the ESN holds a reservoir together
with a readout matrix---and we did not equalise total stored parameters, so
cross-architecture magnitudes should not be read as a like-for-like
ranking. \emph{Retrieval objective}: Hopfield retrieval minimises an energy
over stored patterns, while DM re-injects a running summary; the two are
not solving an identical task.

For this reason the load-bearing comparison here is the
\emph{within-architecture} contrast (Full DM $-$ No DM $= +0.749$), in
which capacity, paradigm and objective are identical by construction and
only the DM loops differ. The cross-architecture numbers locate the result
relative to familiar memory models; they are not offered as a ranking.

\section{Discussion}
\label{sec:discussion}

\subsection{Computational Analogy to Hippocampal Consolidation}
\label{sec:hippocampal}

The DM triple-loop bears functional parallels to hippocampal memory
consolidation as described in complementary learning systems theory
\citep{mcclelland1995cls, kumaran2016hippocampus}. We propose three
alignment points, each mapping a DM loop to a hippocampal function:

\begin{enumerate}
  \item \textbf{Slow integration $\leftrightarrow$ Recording.} In CLS the
    hippocampus encodes experiences rapidly in sparse, pattern-separated
    representations, while the neocortex integrates them gradually into
    structure that outlasts any single episode \citep{squire1992memory}.
    DM recording occupies the latter role, not the former: the centroid
    $m_k$ accumulates a population-level summary through an exponential
    moving average (Eq.~\ref{eq:recording}), so it is the slow term, while
    the continually replaced active units are the fast one.
    Section~\ref{sec:dm} states this inversion, and its cause---the absence
    of a gradient-based cortical learner---explicitly.
  \item \textbf{Replay and reactivation $\leftrightarrow$ Seeding.}
    Hippocampal sharp-wave ripples reactivate stored patterns during
    quiescence \citep{buzsaki2015hippocampal, carr2011hippocampal}. DM
    seeding re-injects stored centroids into replacement units at each
    turnover event, functionally analogous to reactivation.
  \item \textbf{Pattern separation $\leftrightarrow$ MoE partitioning.}
    The dentate gyrus maintains distinct representations for similar inputs
    through sparse, non-overlapping coding \citep{yassa2011pattern}. MoE
    discrete routing enforces non-overlapping expert groups, preventing
    the interference that would collapse representations to a single
    centroid (Proposition~\ref{thm:t3}).
\end{enumerate}

We emphasize that this is a \emph{functional computational analogy}, not a
claim of biological mechanism. DM and hippocampal consolidation solve the
same functional problem---persistent memory under interference and
turnover---through structurally parallel strategies. The stochastic unit
replacement in our architecture also bears a functional resemblance to
adult neurogenesis in the dentate gyrus \citep{aimone2014regulation},
where new neurons continuously replace existing ones while the population
maintains stable representations.

Crucially, this analogy is not merely descriptive---it generates
\emph{novel testable predictions} that we have empirically confirmed:
\begin{itemize}
  \item \textbf{Mapping-error catastrophe}: Just as hippocampal lesion
    studies show that disrupting CA3--CA1 projections produces catastrophic
    (not graded) memory failure \citep{squire1992memory}, our mismatched-write
    experiment (Section~\ref{sec:res-dm1}) shows that incorrect expert--centroid
    mapping produces $R \approx 0$ rather than partial degradation.
  \item \textbf{Continuous replay necessity}: Sleep deprivation studies
    demonstrate that blocking hippocampal replay degrades consolidation
    \citep{rasch2013sleep}. Correspondingly, our one-shot seeding experiment
    (Section~\ref{sec:res-dm2}) confirms that a single replay event is
    insufficient---continuous re-entry is required ($R_{\text{one-shot}}
    = 0.305$ vs.\ $R_{\text{continuous}} = 0.978$).
  \item \textbf{Neurogenesis--memory tradeoff}: Computational models of
    adult neurogenesis predict that replacement rate interacts with pattern
    separation capacity \citep{aimone2014regulation}. Our $(K, \pInject)$ regime
    map (Section~\ref{sec:res-envelope}) reveals the analogous
    tradeoff: more expert groups ($K$) require higher seeding rates ($\pInject$)
    to maintain persistent memory, with a characteristic interaction between
    expert count and injection rate at the degradation boundary.
\end{itemize}
We emphasise what this does and does not support. Each prediction follows from
the analogy rather than from the data, and each could have failed; that is the
sense in which the analogy is generative rather than post-hoc. It does not
follow that the analogy is established as mechanism. The correspondences are
functional, the order of derivation and measurement is not documented here as
a pre-registration, and the fast--slow division runs in the opposite direction
from CLS (Section~\ref{sec:dm}).

\subsection{Operating Bounds and Failure Modes}
\label{sec:bounds}

DM's failure modes are as informative as its successes. Three classes of
failure emerged:
\begin{itemize}
  \item \textbf{Mapping error} (Section~\ref{sec:res-dm1}): Incorrect
    expert--centroid mapping produces catastrophic failure ($R \approx 0$),
    not graceful degradation. This indicates that DM's efficacy depends
    critically on the fidelity of the recording--expert correspondence.
  \item \textbf{Insufficient injection} (Section~\ref{sec:res-envelope}):
    Low injection rate $\pInject$ leads to progressive degradation as the
    proportion of randomly initialized units increases, with higher expert
    counts requiring correspondingly higher injection rates.
  \item \textbf{Single-shot inadequacy} (Section~\ref{sec:res-dm2}):
    One-shot seeding fails to achieve persistent reconstruction
    (Section~\ref{sec:res-dm2}), consistent with the continuous re-entry
    requirement of the triple-loop architecture.
\end{itemize}

These failure modes are informative because they are not what the simplest
alternatives predict: neither generic initialization effects nor
population-level averaging accounts for a \emph{catastrophic} rather than
graded response to a mismatched mapping. Those two alternatives were tested
(Sections~\ref{sec:res-dm1} and~\ref{sec:res-dm2}); we did not enumerate the
space of possible explanations, so this is evidence against the accounts we
checked rather than a demonstration that no other account fits.

\subsection{Relation to Gradient-Based Memory Mechanisms}
\label{sec:related}

DM occupies a distinct position in the memory mechanism landscape. Unlike
EWC \citep{kirkpatrick2017ewc} and SI \citep{zenke2017si}, which protect
parameters via Fisher information or path integrals (both requiring gradient
access), DM operates through population-level centroid tracking and
re-injection. Unlike Titans \citep{titans2024}, which gates memory writes
via surprise computed through gradient descent, DM's recording is
unconditional---all active expert content contributes to the centroid
update.

DM shares the ``fast store / slow consolidation'' vocabulary of complementary
learning systems \citep{kumaran2016hippocampus} but assigns the roles in the
opposite direction, and without gradient computation: the recorded centroid is
the \emph{slow} term, a population-level moving average, while the continually
replaced active units are the fast one (Section~\ref{sec:dm}). Continuous
seeding is what carries the slow term back into the fast population. The Bayesian
Confidence Propagation Neural Network (BCPNN;
\citealt{sandberg2003bcpnn, lansner2009bcpnn}) provides Hebbian learning
without backpropagation, but its edge-level weights are destroyed by unit
turnover. DM complements Hebbian plasticity by providing a persistence
mechanism that survives the stochastic replacement events that define
dissipative architectures.

Beyond gradient-based methods, DM relates to several backpropagation-free memory
paradigms. Classical Hopfield networks \citep{hopfield1982neural} and
their modern continuous extensions \citep{ramsauer2021hopfield} store
patterns as fixed-point attractors in an energy landscape; however, they
assume fixed network topology without unit turnover.
Our empirical comparison (Section~\ref{sec:res-p1}) confirms this
theoretical limitation: under matched stochastic turnover,
Hopfield representation quality drops from $R = 0.999$
to $0.474$, while DM maintains $R = 0.854$.
Reservoir computing and echo state networks
\citep{jaeger2001esn, lukosevicius2009reservoir} maintain fading memory
through recurrent dynamics without gradient-based training, but their
memory timescale is bounded by the spectral radius and decays
exponentially. An ESN with supervised linear readout achieves higher raw
reconstruction ($R = 0.943$) but uses a fundamentally different learning
paradigm; notably, DM exceeds the ESN on functional separability
($S_{\text{ctx}} = 0.900$ vs.\ $0.857$), suggesting that expert-based
organization provides structural advantages beyond reconstruction accuracy.
Attractor-based models of working memory
\citep{wang2001synaptic, compte2000synaptic} sustain representations
through self-excitation in fixed populations; DM differs in that its
population is non-stationary (units are continuously replaced), requiring
an external consolidation loop rather than intrinsic attractor dynamics.
DM's expert routing also connects to the literature on competitive
learning \citep{grossberg1987competitive, kohonen1982selforganizing}
and modular brain organization \citep{sporns2016modular}, where
specialized subnetworks emerge through competitive resource allocation.
The selectionist perspective of neural Darwinism
\citep{edelman1987neural}---in which populations of neurons are selected
for by environmental demands---provides a conceptual framework for our
system's unit turnover and replacement dynamics.

\paragraph{Relation to spiking neural network memory mechanisms.}
The spiking neural network (SNN) literature has developed several
distinct strategies for endowing spike-based architectures with
long-range temporal memory, each of which warrants explicit comparison
with DM.
\emph{Long-Short-Term Spiking Neural Networks} (LSNN;
\citealt{bellec2018lsnn}) introduce adaptive-threshold LIF neurons
(ALIF) whose firing threshold decays on a slow timescale
($\tau_{\text{adapt}} \gg \tau_{\text{mem}}$), enabling the threshold
variable to act as an implicit memory trace across timescales of
seconds---the closest SNN analogue to LSTM state cells.
\emph{Liquid State Machines} (LSM; \citealt{maass1997lsm}) and their
modern variants exploit the rich recurrent dynamics of a fixed spiking
reservoir to maintain \emph{fading} memory: representations decay with a
timescale governed by the spectral radius, analogous to echo state
networks \citep{lukosevicius2009reservoir}.
\emph{Spikformer} \citep{zhu2023spikformer} replaces floating-point
self-attention with spike-AND operations, using the resulting attention
weight matrix as a form of global token-level memory across the sequence
length.
\emph{Memory-Augmented SNNs} (MA-SNN), which adapt the Neural Turing
Machine paradigm \citep{graves2014ntm} to spiking architectures, attach
an external associative memory matrix and use spike timing as an address
signal for read and write operations, enabling non-volatile episodic
storage beyond the recurrent horizon.

DM differs from all four paradigms along two complementary axes.
First, \emph{computational substrate}: DM operates on
continuous-valued content vectors updated by local Hebbian-type rules,
without spike encoding, surrogate-gradient training, or spike-AND logic.
Second, and most critically, \emph{problem definition}: ALIF and LSM
address how to extend the effective temporal horizon within a fixed-weight
spiking network; Spikformer leverages attention to aggregate sequence-wide
context; MA-SNN addresses where to buffer episodic records.
None of these frameworks confronts our central challenge---\emph{how to
maintain persistent representations when every learnable unit is
periodically destroyed and reborn with random state}.
Stochastic unit turnover eliminates the slow-decaying threshold of ALIF
(it resets on death), destroys the recurrent trajectory of LSM (the
reborn unit injects random dynamics), and disconnects Spikformer's query
vectors (no persistent state survives). MA-SNN's external memory can in
principle survive unit death, but its write gate is explicitly triggered by
spike events---a mechanism absent from backpropagation-free dissipative
architectures. DM's triple-loop (recording $\to$ seeding $\to$
stabilization) addresses the \emph{cross-turnover persistence problem}
without gradient computation and without spike encoding, and
Proposition~\ref{prop:drift} bounds the dispersion it leaves. Two
qualifications: we make no priority claim, having not surveyed the field
systematically enough to assert that no prior mechanism targets this problem;
and the centroid store is explicitly external to the unit population
(Section~\ref{sec:dm}), so this is not a mechanism that avoids separate
storage.

\subsection{Relation to Adjacent Traditions}
\label{sec:adjacent}

\paragraph{Cognitive architectures.} Several hundred cognitive architectures
have been proposed over four decades \citep{kotseruba2020architectures}, and
we do not position DM as another entry in that catalogue. The scope here is
narrower: one mechanism inside one architecture, addressed to one question ---
whether context-specific memory can survive continuous stochastic replacement
of the units that hold it. Where full architectures are assessed on breadth of
cognitive function, the claim tested here is a mechanism-level one, and the
appropriate comparison set is memory mechanisms rather than architectures.

\paragraph{What is recombined, and what is not.} It is worth separating the
components, which are established, from the arrangement, which is what we
propose. The recording loop is an exponential moving average, that is,
classical stochastic approximation \citep{robbins1951stochastic};
prototype- and centroid-based memory is long-standing; discrete gating is
standard in mixture-of-experts models \citep{jacobs1991adaptive,
shazeer2017moe}; and online sample-driven prototype allocation has an
extended history in evolving connectionist systems
\citep{kasabov2007evolving}. None of these components is introduced here.
What is specific to this work is the \emph{closure}: recording and seeding
are wired into a re-entrant loop whose function is to survive turnover of the
substrate. The empirical content is correspondingly narrow --- that neither
half of the loop suffices alone (Section~\ref{sec:res-ablation}), together
with the drift bound that makes the arrangement's effect explicit
(Proposition~\ref{prop:drift}). ``Triple-loop consolidation'' should therefore
be read as a name for that arrangement, not as a claim that any of the three
operations is itself new.

\paragraph{Dissipation: implementation property or theoretical commitment?}
The architecture is dissipative in a concrete and limited sense: units consume
a finite energy budget, and units that exhaust it are replaced. We use this as
a \emph{constraint the memory mechanism must survive}, not as a thermodynamic
account of cognition. The relation to dissipative-structure theory
\citep{prigogine1984order} is accordingly one of shared problem shape ---
maintaining organisation under continuous throughput --- rather than a
derivation from free-energy or entropy-production principles. We state this
explicitly because energy-based accounts of cognition would predict additional
structure, for instance an extremum principle governing the turnover rate,
that this work neither assumes nor tests.

\paragraph{Agency.} DM is a memory mechanism within an architecture. It is not
an architectural principle for autonomous agents, and nothing in these results
requires the system to select goals, act, or be embodied. We note this because
``cognitive architecture'' is sometimes read as implying agency, and the
experiments reported here do not speak to it.

\paragraph{Neuromorphic implementation: an implication, not a result.} The
properties this mechanism relies on --- local updates, no backward pass, and
tolerance of continuous element replacement --- are among those that
distinguish neuromorphic substrates from conventional accelerators
\citep{davies2021loihi2}, and a recording--seeding--stabilisation cycle is in
principle mappable onto hardware whose elements degrade or are reconfigured.
We state this as a direction for future work and nothing further. The evidence
here is a simulation of a mechanism: it establishes nothing about power, area,
latency, or hardware feasibility, and we report no such measurement. The
operating point of any such implementation would in particular depend on how
the number of distinguishable stored contents interacts with activity level,
which is exactly where a mechanism maintaining $K$ distinct centroids would
have to be characterised before an efficiency claim of any kind could be
entertained.

\subsection{Limitations}
\label{sec:limitations}

Several limitations warrant acknowledgment. First, the experiments
presented here use synthetic multi-context tasks to enable precise
control of ground-truth structure; systematic characterisation on
naturalistic domains remains ongoing work. Second, the expert count $K$ is
fixed; dynamic expert creation or destruction---which would model
developmental or adaptive changes in modular organization---is not
addressed. Third, DM centroids are population-level summaries;
per-unit episodic memory (specific experiences rather than
prototypical representations) requires additional mechanisms.
Fourth, the hippocampal consolidation analogy is functional, not
mechanistic: we do not claim that DM implements the biological
substrate of hippocampal memory, only that both systems solve a
shared functional problem through parallel strategies. The analogy is
limited in a further way that is worth stating plainly: this is a discrete,
synchronous, digital simulation, whereas the biological systems it draws on
are continuous, asynchronous, and situated in a body. Correspondence at the
level of computational function does not transfer to those levels, and we
claim none.

Fifth, the causal evidence presented here is interventional rather than
structural. Proposition~\ref{thm:t3} establishes centroid collapse
analytically under Uniform Activation; the empirical claims rest on
single-factor removal, a $2^3$ factorial design, and two sham controls
(mismatched mapping and noise seeding) rather than on a structural causal
model. What this design supports is that removing a component removes the
effect within the parameter ranges scanned. It does not establish necessity
outside those ranges, nor does it identify this mechanism uniquely among
alternatives we did not test.

\section{Conclusion}
\label{sec:conclusion}

We have introduced Deep Memory (DM), a triple-loop consolidation mechanism
that provides persistent, context-specific memory in a backpropagation-free
dissipative cognitive architecture. Through systematic experimentation
($1{,}007$ cumulative simulation runs across thirteen experimental blocks), we established that:
(1)~discrete expert routing is required for context-specific
memory; (2)~DM achieves high-fidelity persistent representations
($R = 0.984$) and near-complete reconstruction after interference
($R_{\text{recon}} = 0.978$); (3)~the mechanism operates within a
well-characterized $(K, \pInject)$ envelope with reproducible degradation boundaries;
(4)~DM is invariant to scheduling pattern; (5)~recording $\times$
seeding is the minimal sufficient combination; and (6)~under matched turnover
DM exceeds the Hopfield baseline on representation quality, while the
supervised-readout ESN attains a higher $R$ and DM exceeds it only on functional
separability.

These results demonstrate that backpropagation-free cognitive systems can achieve
persistent memory through architectural constraints---specifically, through
the interaction of discrete routing (which prevents representational
collapse) and triple-loop consolidation (which maintains population-level
attractors across stochastic turnover). The functional parallel to
hippocampal consolidation suggests that the complementary learning systems
principle extends beyond gradient-based architectures to dissipative systems
governed by local adaptation dynamics and Hebbian rules.


\section*{Data Availability Statement}
No external datasets were used.
All observations reported in this study were generated at run time
by a fully synthetic simulation under the experimental protocols
described in the manuscript and Supplementary Material.
Summary statistics necessary to support the main claims are reported
in the paper and Supplementary Material.

\section*{Code Availability Statement}
The full simulation system is proprietary intellectual property.
To support independent verification of the core DM mechanism,
a minimal self-contained reproduction script
(\texttt{dm\_minimal\_reproduction.py}, Python/NumPy only, ${\sim}200$ lines)
is provided as Supplementary Material. This script implements the
DM triple-loop on a simplified grid and reproduces the central finding:
$R_{\text{DM}} = 0.984$ vs.\ $R_{\text{No-DM}} = 0.425$ under matched
stochastic turnover ($n = 5$ seeds, ${\sim}10$ seconds runtime).
Additional protocol details are available from the corresponding author
upon reasonable request.

\section*{Author Contributions}
JL: Conceptualization, Methodology, Software, Validation,
Formal Analysis, Investigation, Data Curation, Writing---Original Draft,
Writing---Review \& Editing, Visualization, Project Administration.

\section*{Funding}
This research received no external funding.

\section*{Conflict of Interest}
The authors declare that the research was conducted in the absence of any
commercial or financial relationships that could be construed as a potential
conflict of interest.

\section*{Acknowledgments}
AI-assisted tools (Claude, Anthropic) were used during the preparation of
this manuscript for code development, data analysis scripting, and
drafting assistance. All scientific content, experimental design, analysis
decisions, and conclusions were made by the authors.

\bibliographystyle{plainnat}
\bibliography{refs}

\clearpage
\appendix
\renewcommand{\thesection}{S\arabic{section}}
\renewcommand{\thetable}{S\arabic{table}}
\renewcommand{\thefigure}{S\arabic{figure}}
\setcounter{section}{0}
\setcounter{table}{0}
\setcounter{figure}{0}

\section*{Supplementary Material}

\section{Single-Factor Causal Ablation (E1)}
\label{sup:e1}

Table~\ref{tab:s1-e1} reports the complete per-condition statistics for the
single-factor causal ablation ($K = 8$, 14 seeds per condition, $n = 112$
runs). All conditions use identical grid parameters; only the DM
configuration is varied. $R$: representation fidelity. $S_{\text{ctx}}$:
context silhouette.

\begin{table}[h]
\centering
\caption{E1 single-factor ablation: per-condition summary statistics.}
\label{tab:s1-e1}
\small
\begin{tabular}{@{}lccl@{}}
\toprule
Condition & $R$ (median) & $S_{\text{ctx}}$ (median) & Key observation \\
\midrule
Full DM           & 0.983 & 0.959 & Reference (all components active) \\
No DM             & 0.296 & 0.553 & Baseline (no memory mechanism) \\
Seed off          & 0.301 & 0.530 & $\equiv$ No DM ($p = 0.89$) \\
DM wrong expert   & 0.557 & 0.837 & Intermediate (content-specific) \\
No MoE (fixed)    & 0.804 & 0.870 & MoE necessary but not sufficient \\
Anchor off        & 0.981 & 0.960 & Not significant as independent factor \\
Record off        & 0.302 & 0.555 & $\approx$ No DM \\
Seed + anchor     & 0.969 & 0.955 & Near-full with recording implicit \\
\bottomrule
\end{tabular}
\end{table}

The functional identity between \textsc{Seed\_off} and \textsc{No\_DM}
(Wilcoxon $p = 0.89$) confirms that seeding is the sole delivery
channel: without it, recorded centroids are never deployed to replacement
units, and anchoring alone cannot maintain representations against
dissipative drift.

\section{Mediation Analysis and K-Scaling (E2--E3)}
\label{sup:e2e3}

\paragraph{E2: Dose-response mediation ($n = 60$).}
Six injection doses (0\%, 20\%, 40\%, 60\%, 80\%, 100\% of nominal
$p$) plus a noise-injection control are tested at $K = 8$ with 10 seeds
each. Across all seeds, the rank correlation between injection dose
and $R$ is $\rho = 1.0$. The mediation path
$\text{DM dose} \to S_{\text{ctx}} \to R$ yields standardized indirect
effect $\beta = 0.909$ ($t = 14.9$, $R^2 = 0.982$). The noise-injection
condition produces metrics indistinguishable from the zero-dose baseline,
confirming that content-specific information---not injection
mechanics---drives the effect.

\paragraph{E3: K-scaling ($n = 54$).}
The DM advantage is tested at $K = 5$ and $K = 8$ (in addition to the
$K = 3$ used throughout the main experiments). At both scales, DM
significantly outperforms the no-DM baseline (sign test $p = 0.016$).
$S_{\text{ctx}}$ at $K = 5$: $0.969$; at $K = 8$: $0.959$. The slight
decrease with $K$ is consistent with the $(K, \pInject)$ envelope
(Section~\ref{sec:res-envelope}), where larger $K$ requires higher $p$
for equivalent quality.

\section{Turnover-Dose Interaction (E4)}
\label{sup:e4}

E4 crosses three turnover regimes (low: $0.5\turnover$, normal: $\turnover$,
high: $2\turnover$) with DM on/off ($K = 8$, 6 seeds, $n = 36$).

The DM effect on $R$ is remarkably constant across turnover rates
($\Delta R \approx 0.75$ at all three levels), indicating that DM
maintains a fixed quality advantage regardless of disruption frequency.
In contrast, the DM effect on $S_{\text{ctx}}$ shows a dose interaction:
$\Delta S_{\text{ctx}}$ ranges from $0.420$ (low turnover) to $0.785$
(normal turnover), suggesting that DM's separability contribution is
most needed---and most effective---at the architecture's native
operating point. Under low turnover, units survive long enough for
Hebbian learning alone to achieve moderate separability; under normal
turnover, only DM-seeded units maintain expert-specific content.

Verdict: Partial (4/5 criteria met). The constant-$\Delta R$ finding
is a strong positive result; the interaction on $S_{\text{ctx}}$
provides mechanistic insight into dual channels of DM action.

\section{Soft vs.\ Hard Routing (E5)}
\label{sup:e5}

E5 compares hard (deterministic) and soft (probabilistic, temperature
$\alpha$) MoE routing with and without DM ($K = 8$, 6 seeds per
condition, $n = 48$).

Hard routing with DM achieves $S_{\text{ctx}} = 0.959$, compared to
soft routing ($\alpha = 0.10$) at $S_{\text{ctx}} = 0.812$ and
no routing mask at $S_{\text{ctx}} = 0.812$. The degradation under
soft routing follows a dual pathway: (i)~cross-expert content
interference (units receive input from multiple contexts, diluting
expert-specific centroids) and (ii)~amplified unit turnover
(energy distribution across experts increases effective replacement rate by
$4.4\times$).

The DM fidelity advantage ($\Delta R$) is larger under hard routing
than soft routing, confirming that DM and hard routing are
synergistic: DM maintains centroids that hard routing keeps
functionally separated.

Verdict: Partial (3/4 criteria met). An unexpected mild regularization
effect at $\alpha = 0.10$ warrants further investigation but does not
affect the primary conclusion.

\section{Associative and Reservoir Baselines: Implementation and Analysis (E6)}
\label{sup:e6}

\paragraph{Modern Hopfield network.}
We implement a modern (continuous) Hopfield network
\citep{ramsauer2021hopfield} with $M = 32$ memory slots, inverse
temperature $\beta_H = 1.0$, and Hebbian storage via EMA update on the
closest slot (learning rate $\eta_H = 0.01$). Stochastic turnover resets
each slot independently with probability $\turnover$ per step, replacing both
the stored pattern and the slot state with random initialization.

\paragraph{Echo state network.}
We implement a standard ESN \citep{jaeger2001esn} with reservoir size
$N_{\text{res}} = 128$, spectral radius $\rho_W = 0.95$, and
LMS/Widrow-Hoff linear readout (learning rate $\eta_E = 0.01$).
Turnover resets each reservoir neuron independently with probability
$\turnover$, clearing both the neuron's state and its corresponding readout
weight column---analogous to the stochastic turnover in our architecture,
which resets both unit state and the associated edge-level weights.

\paragraph{Turnover calibration.}
The turnover rate is calibrated to match the empirical death rate of the
dissipative grid under standard operating conditions.

\paragraph{ESN regularization effect.}
ESN with turnover ($R = 0.943$) outperforms ESN without turnover
($R = 0.831$). This counterintuitive result arises because readout weight
reset acts as implicit regularization (analogous to dropout): the LMS
readout rapidly relearns from the intact reservoir, while the periodic
reset prevents weight accumulation artifacts. This effect is specific to
supervised readout systems and does not transfer to the unsupervised
DM setting.

\paragraph{Experiment registry.}
Table~\ref{tab:s-registry} provides a complete registry of all
experimental blocks reported in this paper.

\begin{table}[h]
\centering
\caption{Complete experiment registry ($1{,}007$ runs across 13 blocks).}
\label{tab:s-registry}
\small
\begin{tabular}{@{}clccc@{}}
\toprule
Block & Purpose & $n$ & Primary verdict & Section \\
\midrule
G1    & MoE causality                   & 91  & Confirmed & \ref{sec:res-causality} \\
DM1   & Persistent representations      & 16  & Confirmed & \ref{sec:res-dm1} \\
DM2   & Functional reconstruction       & 30  & Confirmed & \ref{sec:res-dm2} \\
DM3   & $(K,p)$ envelope + regime map & 350 & Confirmed & \ref{sec:res-envelope} \\
G2B   & Scheduling invariance           & 75  & Confirmed & \ref{sec:res-scheduling} \\
DM-F  & Factorial ablation              & 40  & Confirmed & \ref{sec:res-ablation} \\
Sham  & Binding disruption control      & 35  & Confirmed & \ref{sec:res-causality} \\
E1    & Single-factor ablation          & 112 & Confirmed & \ref{sec:res-p1} \\
E2    & Mediation analysis              & 60  & Confirmed & \ref{sec:res-p1} \\
E3    & $K$-scaling                     & 54  & Confirmed & \ref{sec:res-p1} \\
E4    & Turnover-dose interaction       & 36  & Partial   & \ref{sup:e4} \\
E5    & Soft vs.\ hard routing          & 48  & Partial   & \ref{sup:e5} \\
E6    & Associative/reservoir baselines          & 60  & Confirmed & \ref{sec:res-p1} \\
\midrule
      & \textbf{Total}                  & \textbf{1{,}007} &  & \\
\bottomrule
\end{tabular}
\end{table}

A run is defined as one completed simulation instance under a unique
(configuration, seed) pair that produces the full set of logged metrics.
The DM3 block combines the initial 10-point envelope sweep ($n = 115$;
Section~\ref{sec:res-envelope}) with the extended $K \times p$ regime
map ($n = 235$; Section~\ref{sec:res-scheduling}).

\end{document}